\newcommand{\AAA}{\mathcal A}
\newcommand{\BBB}{\mathcal B}
\newcommand{\N}{\mathbb N}
\newtheorem{theorem}{Theorem}[section]
\newtheorem{lemma}[theorem]{Lemma}
\newtheorem{claim}[theorem]{Claim}
\newtheorem{definition}[theorem]{Definition}
\theoremstyle{remark}
\theoremstyle{definition}
\newenvironment{innerproof}
 {\proof}
 {\endproof}
\newcommand{\algname}[1]{{\tt#1}}
\newcommand{\R}{\mathbb{R}}
\newcommand{\eps}{\varepsilon}
\newcommand{\bigdelta}{\tilde{\delta}}
\newcommand{\bigeps}{\tilde{\varepsilon}}
\newcommand{\bigO}[1]{\mathcal{O}\left(#1\right)}
\newcommand{\tildeO}[1]{\widetilde{\mathcal{O}}\left(#1\right)}
\newcommand{\logstar}[1]{\log^*\left(#1\right)}
\newcommand{\domain}{\mathcal{X}}
\newcommand{\distH}[2]{d_H(#1,#2)}
\newcommand{\alg}{\mathcal{A}}
\newcommand{\measure}{\mathcal{P}}
\newcommand{\Lap}{{\rm Lap}}
\newcommand{\errD}[1]{{\rm err}_{\measure}(#1)}
\newcommand{\errS}[2]{{\rm err}_{#2}(#1)}
\newcommand{\class}{\mathcal{H}}
\newcommand{\abs}[1]{\left|#1\right|}
\newcommand{\vc}[1]{VC\left(#1\right)}
\newcommand{\ASample}[2]{IP_{#1}\left(#2\right)}
\newcommand{\sample}[2]{\Delta}
\newcommand{\IP}{\mathcal{A}}
\newcommand{\complexity}{m}
\newcommand{\inner}[1]{D_{#1}}
\definecolor{gray}{gray}{0.4}
\newcommand{\event}[1]{\mathcal{I}_{#1}}
\newcommand{\num}[2]{ind_{#1}(#2)}
\newcommand{\var}{2\Delta}
\newcommand{\indicator}[1]{\mathbbm{1}_{#1}}
\newcommand{\blackbox}{\mathcal{B}}
\newcommand{\RandMargin}{\mathcal{RM}}
\newcommand\numberthis{\addtocounter{equation}{1}\tag{\theequation}}
\DeclarePairedDelimiter\set\{\}
\newcommand{\epsset}{R}
\newtheorem{question}[theorem]{Question}
\newtheorem{myremark}[theorem]{Remark}
\title{On the Sample Complexity of Privately Learning Axis-Aligned Rectangles}
\author{Menachem Sadigurschi\thanks{Ben-Gurion University. \texttt{sadigurs@post.bgu.ac.il}.}
\and
Uri Stemmer\thanks{Ben-Gurion University and Google Research. \texttt{u@uri.co.il}.}}
\begin{document}

\date{July 24, 2021}
\maketitle

\begin{abstract}
We revisit the fundamental problem of learning
Axis-Aligned-Rectangles over a finite grid $X^d\subseteq\R^d$ with differential privacy. Existing results show that the sample complexity of this problem is at most 
$\min\left\{ d{\cdot}\log|X| \;,\; d^{1.5}{\cdot}\left(\log^*|X| \right)^{1.5}\right\}$. That is, existing constructions either require sample complexity that grows linearly with $\log|X|$, or else it grows super linearly with the dimension $d$. 
We present a novel algorithm that reduces the sample complexity to only
$\tildeO{d{\cdot}\left(\log^*|X|\right)^{1.5}}$, 
attaining a dimensionality optimal dependency without requiring the sample complexity to grow with $\log|X|$.
The technique used in order to attain this improvement involves the deletion of ``exposed'' data-points on the go, in a fashion designed to avoid the cost of the adaptive composition theorems.
The core of this technique may be of individual interest, introducing a new method for constructing statistically-efficient private algorithms.
\end{abstract}

\section{Introduction}
\label{submission}

Differential privacy~\citep{DMNS06} is a mathematical definition for privacy, that aims to enable statistical analyses of databases while providing strong guarantees that individual-level information does not leak. More specifically, consider a database containing data pertaining to individuals, and suppose that we have some data analysis procedure that we would like to apply to this database. We say that this procedure preserves {\em differential privacy} if no individual's data has a significant effect on the distribution of the outcome of the procedure. Intuitively, this guarantees that whatever is learned about an individual from the outcome of the computation could also be learned with her data arbitrarily modified (or without her data). Formally,

\begin{definition}[\citet{DMNS06}]\label{def:DP}
A randomized algorithm $\AAA$ is $(\eps,\delta)$-{\em differentially private} if for every two databases $S,S'$ that differ on one row (such databases are called {\em neighboring}), and every set of outcomes $F$, we have
$\Pr[\AAA(S)\in F]\leq e^{\eps}\cdot \Pr[\AAA(S')\in F]+\delta.$ 
The definition is referred to as {\em pure} differential privacy when $\delta=0$, and {\em approximate} differential privacy when $\delta>0$.
\end{definition}

Over the last decade, we have witnessed an explosion of research on differential privacy, and by
now it is largely accepted as a gold-standard for privacy preserving data analysis. In particular, there has been a lot of interest in designing {\em private learning algorithms}, which are learning algorithms that guarantee differential privacy for their training data. Intuitively, this guarantees that the outcome of the learner (the identified hypothesis) leaks very little information on any particular point from the training set. Works in this vein include \citep{KLNRS08,BBKN12,BNS13,BNS13b,BNS15,BNSV15,FeldmanX15,BunNS19,BeimelMNS19,KaplanMMS19,KaplanLMNS20,AlonBMS20,KaplanMST20,BunLM20,AlonLMM19}, and much more.

However, in spite of the dramatic progress made in recent years on the theory and practice of private learning, much remains unknown and answers to fundamental questions are still missing. In this work, we revisit one such fundamental open question, specifically,

\begin{question}{\em
What is the sample complexity of learning axis-aligned rectangles with privacy?}
\end{question}

Non-privately, learning axis aligned rectangles is one of the most simple and basic of learning tasks, often given as {\em the first} example for PAC learning in courses or teaching books. Nevertheless, somewhat surprisingly, the sample complexity of learning axis-aligned rectangles with differential privacy is not well-understood. In this work we make a significant progress towards understanding this basic question. %

\subsection{Existing and New Results}

Recall that the VC dimension of the class of all axis-aligned rectangles over $\R^d$ is $O(d)$, and hence a sample of size $O(d)$ suffices to learn axis-aligned rectangles non-privately (we omit throughout the introduction the dependency of the sample complexity in the accuracy, confidence, and privacy parameters). In contrast, it turns out that with differential privacy, learning axis-aligned rectangles over $\R^d$ is impossible, even when $d=1$ \citep{FeldmanX15,BNSV15,AlonLMM19}. In more detail, let $X=\{1,2,\dots,|X|\}$ be a finite (one dimensional) grid, and consider the task of learning axis-aligned rectangles over the finite $d$-dimensional grid $X^d\subseteq\R^d$. In other words, consider the task of learning axis-aligned rectangles under the promise that the underlying distribution is supported on (a subset of) the finite grid $X^d$. 

For pure-private learning, \citet{FeldmanX15} showed a lower bound of $\Omega\left(d\cdot\log|X|\right)$ on the sample complexity of this task. 
This lower bound is tight, as a pure-private learner with sample complexity $\Theta\left(d\cdot\log|X|\right)$ can be obtained using the generic upper bound of \citet{KLNRS08}.
This should be contrasted with the non-private sample complexity, which is independent of $|X|$.

For approximate-private learning, \citet{BNS13b} showed that the dependency of the sample complexity in $|X|$ can be significantly reduced. This, however, came at the cost of increasing the dependency in the dimension $d$. Specifically, the private learner of \citet{BNS13b} has sample complexity $\tilde{O}\left( d^3 \cdot 8^{\log^*|X|} \right)$.
We mention that a dependency on $\log^*|X|$ is known to be necessary \citep{BNSV15,AlonLMM19}. 
Recently, \citet{BeimelMNS19} and \citet{KaplanMST20} studied the related problem of privately learning {\em halfspaces} over a finite grid $X^d$, and presented algorithms with sample complexity $\tilde{O}\left( d^{2.5} \cdot 8^{\log^*|X|} \right)$. Their algorithms can be used to privately learn axis-aligned rectangles over $X^d$ with sample complexity $\tilde{O}\left( d^{1.5} \cdot 8^{\log^*|X|} \right)$. This can be further improved using the recent results of \citet{KaplanLMNS20}, and obtain a differentially private algorithm for learning axis-aligned rectangles over $X^d$ with sample complexity $\tilde{O}\left( d^{1.5} \cdot \left( \log^*|X|\right)^{1.5} \right)$. We consider this bound to be the baseline for our work, and we will elaborate on it later.

To summarize, our current understanding of the task of privately learning axis-aligned rectangles over $X^d$ gives us two kinds of upper bounds on the sample complexity: Either $d\cdot\log|X|$ \;or\; $d^{1.5} \cdot \left( \log^*|X|\right)^{1.5}$. That is, current algorithms either require sample complexity that scales with $\log|X|$, or else it scales super linearly in the dimension $d$. This naturally leads to the following question.

\begin{question}{\em Is there a differentially private algorithm for learning axis-aligned rectangles with sample complexity that scales linearly in $d$ and asymptotically smaller than $\log|X|$?}
\end{question}

We answer this question in the affirmative, and present the following theorem.

\begin{theorem}[informal]
\label{thm:informal}
There exists a differentially private algorithm for learning axis-aligned rectangles over $X^d$ with sample complexity $\tilde{O}\left( d\cdot \left( \log^*|X| \right)^{1.5} \right)$.
\end{theorem}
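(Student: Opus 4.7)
The plan is to reduce private learning of axis-aligned rectangles over $X^d$ to $2d$ instances of the \emph{private interior point} problem: for each coordinate $i\in[d]$ and each side $\ell\in\{\mathrm{left},\mathrm{right}\}$, we want to privately output a point lying strictly between the minimum and maximum $i$th coordinates of the positive examples in the sample. Each such call can be implemented with per-call sample complexity $m:=\tildeO{(\log^*|X|)^{1.5}}$ using the algorithm of \citet{KaplanLMNS20}. The naive approach runs the $2d$ calls in sequence under advanced composition, inflating each call's sample complexity by $\sqrt{d}$ and recovering the $\tildeO{d^{1.5}(\log^*|X|)^{1.5}}$ baseline already noted above. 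Our target is to eliminate this $\sqrt{d}$ factor.

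My plan is to construct an adaptive meta-algorithm that processes the $2d$ endpoints one at a time, maintaining a pool $S_t$ of ``still-fresh'' points. After computing the $t$th endpoint on $S_t$, the algorithm privately identifies and removes from $S_t$ the \emph{exposed} points --- those whose $i_t$th coordinate is close enough to the selected endpoint that they could have appreciably affected its value --- and passes the remainder to round $t{+}1$. By the stability profile of the underlying interior point mechanism, only $\tildeO{m}$ points are exposed per round, so the total number of deletions across all $2d$ rounds is $\tildeO{d\cdot m}$, matching the desired sample budget and ensuring the pool remains large enough to support each subsequent call.

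The hard part, and clearly the core of the paper's contribution, will be the privacy analysis: showing that the deletion scheme attains end-to-end $(\eps,\delta)$-privacy at essentially the cost of a \emph{single} interior point invocation rather than $2d$ composed ones. I would argue this via a coupling between executions on neighboring databases $S,S'$ differing on a single row $x^*$. If $x^*$ is ever exposed in some round $t$, it is deleted immediately afterwards, so it can influence the transcript only in round $t$; if it is never exposed, then by the stability of the interior point mechanism it does not measurably influence any round's output beyond the built-in $(\eps,\delta)$ envelope. Either way, the individual's total privacy loss is bounded by that of one call. The technical crux is implementing the deletion step itself with its own differential privacy guarantee --- I would use a sparse-vector / noisy-thresholding filter so that ``being near the selected endpoint'' is decided with calibrated noise, and pair its privacy cost with that of the same round rather than counting it as a separate mechanism.

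To conclude, I would combine this construction with a standard VC-based uniform convergence argument for axis-aligned rectangles, which requires $\bigO{d/\alpha}$ i.i.d.\ samples for $\alpha$-accuracy non-privately, and verify that each of the $2d$ interior points yields an endpoint whose induced rectangle differs from the target by at most an $\alpha/d$-fraction in probability; a union bound then gives overall accuracy $\alpha$. This yields a total sample complexity of $\tildeO{d\cdot(\log^*|X|)^{1.5}}$, proving Theorem~\ref{thm:informal}. I expect the main work to be in formalizing the deletion-based composition lemma cleanly and stating it at a level of generality that isolates what the interior point mechanism must provide (a bound on the number of ``influential'' points per invocation); given that lemma, the remainder is a routine reduction.
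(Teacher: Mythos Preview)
Your high-level plan---reduce to $2d$ interior point calls, delete ``used'' points after each call, and argue via a coupling that the differing row $x^*$ affects only a bounded number of rounds---matches the paper's strategy. But the proposal has a real gap at exactly the step you flag as ``the technical crux.''

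You write that if $x^*$ is never exposed, then ``by the stability of the interior point mechanism it does not measurably influence any round's output.'' This is the step that does not go through. The interior point mechanism is treated as a black-box $(\eps,\delta)$-DP algorithm; it comes with no stability guarantee of the form ``points far from the output do not affect the output distribution.'' If you feed all of $S_t$ into the $t$th call and only afterwards run a sparse-vector filter to decide who was ``exposed,'' the call has already seen $x^*$, and its output distribution on $S_t$ versus $S_t\cup\{x^*\}$ can differ by the full $(\eps,\delta)$ in every round---so you are back to composing over $2d$ rounds. The paper's Section~1.2 (Failed Attempts \#1 and \#2) walks through why delete-after-use schemes of this flavor do not escape composition; in particular the ``domino effect'' and the failure of the natural noise-coupling to be a bijection.

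The paper's fix is not a post-hoc deletion filter but control over what the interior point call sees \emph{in the first place}. In round $i$ it takes the top $\lceil \mu+w_i\rceil$ points along axis $i$ (with $w_i\sim\Lap(2\Delta)$) and feeds only the \emph{inner} $\Delta$ of those to $\IP$. When $x'$ is not among the top $\lceil\mu+w_i\rceil$, the input to $\IP$ is literally identical on $S$ and $S'$, so that round contributes \emph{zero} privacy loss, not merely an $(\eps,\delta)$ envelope. Deletion then uses the already-private output $p_i$ directly (post-processing; no extra budget, no sparse vector needed). What remains is to bound the number of rounds in which $x'$ does land in the top set; this is $O(\log(1/\delta))$, not one as you claim, because conditioned on $x'$ being in the top set there is only a constant probability that it falls outside the inner $\Delta$ and gets removed (Claim~\ref{claim:concentrate}). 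The final privacy parameter is therefore $\bigeps=O(\eps\log(1/\delta))$, reflecting composition over $O(\log(1/\delta))$ effective rounds rather than a single one.
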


\subsection{Baseline Construction using Composition}
\label{sec:basline}

Before we present the technical ideas behind our construction (obtaining sample complexity linear in $d$), we first elaborate on the algorithm obtaining sample complexity $\tilde{O}\left( d^{1.5} \cdot \left( \log^*|X|\right)^{1.5} \right)$, which we consider to be the baseline for this work. This baseline algorithm is based on a reduction to (privately) solving the following problem, called the {\em interior point problem}.

\begin{definition}[\citealt{BNSV15}]
An algorithm $\alg$ is said to solve the 
\emph{Interior Point Problem} for domain $X$
with failure probability $\beta$ and sample complexity $n$,
if for every $m\geq n$ and every database $S$ containing $m$ elements from $X$ it holds that:
$
\Pr[\min(S) \leq \alg(S) \leq \max(S)] \geq 1-\beta.
$
\end{definition}
That is, given a database $S$ containing (unlabeled) elements from a (one dimensional) grid $X$, the interior point problem asks for an element of $X$ between the smallest and largest elements in $S$. 
The baseline we consider for privately learning axis-aligned rectangles is as follows. Suppose that we have a differentially private algorithm $\AAA$ for the interior point problem over domain $X$ with sample complexity $n$ (let us ignore the failure probability for simplicity). We now use $\AAA$ to construct the following algorithm $\BBB$ that takes a database $S$ containing labeled elements from $X^d$. For simplicity, we assume that $S$ contains ``enough'' positive elements, as otherwise we could simply return the all-zero hypothesis.

\begin{enumerate}[leftmargin=15pt]
    \item For every axis $i\in[d]$:
        \begin{enumerate}
        \item Project the positive points in $S$ onto the $i$th axis.
        \item Let $A_i$ and $B_i$ denote the smallest $n$ and the largest $n$ (projected) points, without their labels.
        \item Let $a_i\leftarrow\AAA(A_i)$ and $b_i\leftarrow\AAA(B_i)$.
    \end{enumerate}
    \item\label{step:Bend} Return the axis-aligned rectangle defined by the intervals $[a_i,b_i]$ at the different axes.
\end{enumerate}

Now, recall that each application of algorithm $\AAA$ returns an {\em interior point} of its input points. Hence, for every axis $i$, it holds that the interval $[a_i,b_i]$ contains (the projection) of all but at most $2n$ of the positive examples in the $i$th axis. Therefore, the rectangle returned in Step~\ref{step:Bend} contains all but at most $2nd$ of the positive points (and it does not contain any of the negative points, because this rectangle is {\em contained} inside the target rectangle). So algorithm $\BBB$ errs on at most $2nd$ of its input points. 

Assuming that $|S|\gg 2nd$, we therefore get that algorithm $\BBB$ has small empirical error. As the VC dimension of the class of axis-aligned rectangles is $O(d)$, this means that algorithm $\BBB$ is a PAC learner for this class with sample complexity $O(nd)$. The issue here is that algorithm $\BBB$ executes algorithm $\AAA$ many times (specifically, $2d$ times). Hence, in order to argue that $\BBB$ is $(\eps,\delta)$-differentially private, standard composition theorems for differential privacy require each execution of algorithm $\AAA$ to be done with a privacy parameter of $\approx\eps/\sqrt{2d}$. This, in turn, would mean that $n$ (the sample complexity of algorithm $\AAA$) needs to be at least $\sqrt{2d}$, which means that algorithm $\BBB$ errs on $2nd\approx d^{1.5}$ input points, which translates to sample complexity of $|S|\gg d^{1.5}$.

The takeaway from this baseline learner is that in order to reduce the sample complexity to be linear in $d$, we want to bypass the costs incurred from composition. That is, we still want to follow the same strategy (apply algorithm $\AAA$ twice on every axis), but we want to do it without appealing to composition arguments in the privacy analysis. We now briefly survey two intuitive attempts that fail to achieve this, but are useful for the presentation.

\paragraph{Failed Attempt \#1.}
As before, let $\AAA$ denote an algorithm for the interior point problem over domain $X$ with sample complexity $n$. Consider the following modification to algorithm $\BBB$ (marked in red). As before, algorithm $\BBB$ takes a database $S$ containing labeled elements from $X^d$, where we assume for simplicity that $S$ contains ``enough'' positive elements.
\begin{enumerate}[leftmargin=15pt]
    \item For every axis $i\in[d]$:
        \begin{enumerate}
        \item Project the positive points in $S$ onto the $i$th axis.
        \item Let $A_i$ and $B_i$ denote the smallest $n$ and the largest $n$ (projected) points, without their labels.
        \item Let $a_i\leftarrow\AAA(A_i)$ and $b_i\leftarrow\AAA(B_i)$.
        \item[\textcolor{red}{(d)}] \textcolor{red}{Delete from $S$ all points (with their labels) that correspond to $A_i$ and $B_i$.}
    \end{enumerate}
    \item Return the axis-aligned rectangle defined by the intervals $[a_i,b_i]$ at the different axes.
\end{enumerate}

The (incorrect) idea here is that by adding Step~1d we make sure that each datapoint from $S$ is ``used only once'', and hence we do not need to pay in composition. In other words, the hope is that if every execution of algorithm $\AAA$ is done with a privacy parameter $\eps$, then the whole construction would satisfy differential privacy with parameter $O(\eps)$.

The failure point of this idea is that by deleting {\em one} point from the data, we can create a ``domino effect'' that effects (one by one) many of the sets $A_i,B_i$ throughout the execution. 
Specifically, consider two neighboring datasets $S$ and $S'=S\cup\{(x',y')\}$ for some labeled point $(x',y')\in X^d\times\{0,1\}$. Suppose that during the execution on $S'$ it holds that $x'\in A_1$. So the additional point $x'$ participates ``only'' in the first iteration of the algorithm, and gets deleted afterwards. However, since the size of the sets $A_i,B_i$ is fixed, during the execution on $S$ (without the point $x'$) it holds that {\em a different point $z$} gets included in $A_1$ instead of $x'$, and this point $z$ is then deleted from $S$ (but it is not deleted from $S'$ during the execution on $S'$). Therefore, also during the second iteration we have that $S$ and $S'$ are not identical (they still differ on one point) and this domino effect can continue throughout the execution. That is, a single data point can affect many of the executions of $\AAA$, and we would still need to pay in composition to argue privacy.

\paragraph{Failed Attempt \#2.}
In order to overcome the previous issue, one might try the following variant of algorithm $\BBB$.

\begin{enumerate}[leftmargin=15pt]
    \item For every axis $i\in[d]$:
        \begin{enumerate}
        \item Project the positive points in $S$ onto the $i$th axis.
        \item[\textcolor{red}{(b)}] \textcolor{red}{Let ${\rm size}_{A_i}=2n+{\rm Noise}$ and let ${\rm size}_{B_i}=2n+{\rm Noise}$.}
        \item[(c)] Let $A_i$ and $B_i$ denote the smallest \textcolor{red}{${\rm size}_{A_i}$} and the largest \textcolor{red}{${\rm size}_{B_i}$} (projected) points, respectively, without their labels.
        \item[(d)] Let $a_i\leftarrow\AAA(A_i)$ and $b_i\leftarrow\AAA(B_i)$.
        \item[(e)] Delete from $S$ all points (with their labels) that correspond to $A_i$ and $B_i$.
    \end{enumerate}
    \item Return the axis-aligned rectangle defined by the intersection of the intervals $[a_i,b_i]$ at the different axes.
\end{enumerate}

The idea now is that the noises we add to the sizes of the $A_i$'s and the $B_i$'s would ``mask'' the domino effect mentioned above. Specifically, the hope is as follows. Consider the execution of (the modified) algorithm $\BBB$ on $S$ and on $S'=S\cup\{(x',y')\}$, and let $i$ be the first axis such that $x'\in A_i\cup B_i$ during the execution on $S'$. Suppose w.l.o.g.\ that $x'\in B_i$. Now, the hope is that if during the execution on $S$ we have that the noisy ${\rm size}_{B_i}$ is smaller by 1 than its value during the execution on $S'$, then this eliminates the domino effect we mentioned, because we would not need to add another point instead of $x'$.  Specifically, during time $i$, the point $x'$ gets deleted from $S'$, and every other point is either deleted from both $S,S'$ or not deleted from any of them. So after time $i$ the two executions continue identically. Thus, the hope is that by correctly ``synchronizing'' the noises between the two executions (such that only the size of the ``correct'' set gets modified by 1) we can make sure that only one application of $\AAA$ is effected (in the last example -- only the execution of $\AAA(B_i)$ is effected), and so we would not need to apply composition arguments.

Although very convincing, this idea fails.
The (very subtle) issue here is that it is not clear how to synchronize the noises between the two executions. To see the problem, let us try to formalize the above argument.

Fix two neighboring databases $S$ and $S'=S\cup\{(x',y')\}$. Let us write $A_i,B_i$ and $A'_i,B'_i$ to denote these sets during the executions on $S$ and on $S'$, respectively.
Aiming to synchronize the two executions, let us define a mapping $\pi:\R^{2d}\rightarrow\R^{2d}$ from noise vectors during the execution on $S'$ to noise vectors during the execution on $S$ (determining the values of ${\rm size}_{A_1},{\rm size}_{B_1},\dots,{\rm size}_{A_d},{\rm size}_{B_d}$), such that throughout the execution we have that $A_i=A'_i$ and $B_i=B'_i$ for all $i$ except for a single pair, say $B_j\neq B'_j$, of neighboring sets.

The straightforward way for defining such a mapping is as follows: Let $j$ be the first time step in which the additional point $x'$ gets included in a set $A'_j$ or $B'_j$, and say that it is included in $B'_j$. Then the mapping would be to reduce (by 1) the value of ${\rm size}_{B_j}$ (the noisy size of $B_j$ during the execution on $S$). This would indeed make sure that, conditioned on the noise vectors $v'$ and $v=\pi(v')$, the two executions differ only in a single application of the interior point algorithm $\AAA$, and hence the outcome distribution of these two (conditioned) executions are very similar (in the sense of differential privacy). That is, for any noise vector $v$ and any event $F$,
$$
\Pr[\BBB(S')\in F | v] \leq e^{\eps}\cdot \Pr[\BBB(S)\in F | \pi(v)] +\delta.
$$

Furthermore, (assuming an appropriate noise distribution) we can make sure that the probability of obtaining the noise vectors $v$ and $\pi(v)$ are similar, with densities differing by at most an $e^\eps$ factor (as is standard in the literature of differential privacy). Therefore, {\em had the mapping $\pi$ we defined was a bijection}, for any event $F$ we would have that

\begin{align*}
    \Pr[\BBB(S')\in F] &= \sum_{v} \Pr[v]\cdot\Pr[\BBB(S')\in F | v]\\
    &\leq \sum_{v} e^{\eps}\cdot\Pr[\pi(v)]\cdot\left(e^{\eps}\cdot\Pr[\BBB(S)\in F | \pi(v)]+\delta\right)\\
    &= \sum_{\pi(v)} e^{\eps}\cdot\Pr[\pi(v)]\cdot\left(e^{\eps}\cdot\Pr[\BBB(S)\in F | \pi(v)]+\delta\right)\\
    &=e^{2\eps}\cdot\Pr[\BBB(S)\in F]+e^{\eps}\cdot\delta,
\end{align*}
which would be great. Unfortunately, the mapping $\pi$ we defined is {\em not} a bijection, and hence the second-to-last equality above is incorrect. To see that it is not a bijection, suppose that $d=2$ and consider a database $S$ containing the following positively labeled points: Many copies of the point $(0,0)$, as well as 10 copies of the point $(1,0)$ and 10 copies of the point $(0,1)$. The neighboring database $S'$ contains, in addition to all these points, also the point $\left(\frac{1}{2},\frac{1}{2} \right)$. Now suppose that during the execution on $S'$ we have that $|B'_1|=5$ and $|B'_2|=4$. That is, the additional point is included in $B'_1$. During the execution on $S$ we therefore reduce (by 1) the size of $B_1$ and so $|B_1|=|B_2|=4$. Now suppose that during the execution on $S'$ we have that $|B'_1|=4$ and $|B'_2|=5$. Here, during the execution on $S$ we reduce the size of $B_2$ and so, again,  $|B_1|=|B_2|=4$. This shows that the mapping $\pi$ we defined is {\em not} a bijection. In general, in $d$ dimensions, it is only a $d$-to-$1$ mapping, which would would break our analysis completely (it will not allow us to avoid the extra factor in $d$).

\subsection{Our Solution - A Technical Overview}

We now present a simplified version of our construction, that overcomes the challenges mentioned above. We stress that the actual construction is a bit different. Consider the following (simplified) algorithm.

\begin{enumerate}[leftmargin=15pt]
    \item For every axis $i\in[d]$:
        \begin{enumerate}
        \item Project the positive points in $S$ onto the $i$th axis.
        \item Let ${\rm size}_{A_i}=100n+{\rm Noise}$ and let ${\rm size}_{B_i}=100n+{\rm Noise}$, where the standard deviation of these noises is, say, $10n$.
        \item Let $A_i$ and $B_i$ denote the smallest ${\rm size}_{A_i}$ and the largest ${\rm size}_{B_i}$ (projected) points, respectively, without their labels.
        \item Let $A_i^{\rm inner}\subseteq A_i$ be the $n$ {\em largest} points in $A_i$. Similarly, let $B_i^{\rm inner}\subseteq B_i$ be the $n$ {\em smallest points in $B_i$}.
        \item Let $a_i\leftarrow\AAA(A_i^{\rm inner})$ and $b_i\leftarrow\AAA(B_i^{\rm inner})$.
        \item Delete from $S$ all points (with their labels) whose projection onto the $i$th is {\em not} in the interval $[a_i,b_i]$.
    \end{enumerate}
    \item Return the axis-aligned rectangle defined by the intersection of the intervals $[a_i,b_i]$ at the different axes.
\end{enumerate}

There are {\em two} important modifications here. First, we still add noise to the size of the sets $A_i,B_i$, but we only use the $n$ ``inner'' points from these sets. Second, we delete elements from $S$ not based on them being inside $A_i$ or $B_i$, but only based on the (privately computed) interval $[a_i,b_i]$. 
We now elaborate on these ideas, and present a (simplified) overview for the privacy analysis. Any informalities made herein are removed in the sections that follow.

Let $S$ and $S'=S\cup\{(x',y')\}$ be neighboring databases, differing on the labeled point $(x',y')$. Consider the execution on $S$ and on $S'$. The privacy analysis is based on the following two lemmas.

\begin{lemma}[informal]
With probability at least $1-\delta$, throughout the execution it holds that $x'$ participates in at most $O(\log(1/\delta))$ sets $A_i,B_i$.
\end{lemma}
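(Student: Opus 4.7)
The plan is to show that whenever $x'$ participates in $A_i$ or $B_i$ at some iteration $i$ in which it is still alive in $S'$, the conditional probability that $x'$ survives iteration $i$ is at most a small constant. Given this, the total number of ``surviving'' participations is stochastically dominated by a geometric random variable, which yields the $O(\log(1/\delta))$ bound via a standard tail estimate.

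The key structural observation, which I would prove first, is the following. Let $r_i$ denote the rank of $x'$ (from below) on axis $i$ among the positive points currently in $S'$ at the start of iteration $i$. Then $x' \in A_i$ iff $r_i \le {\rm size}_{A_i}$, and $x' \in A_i^{\rm inner}$ iff ${\rm size}_{A_i} \in [r_i, r_i + n - 1]$. If $x' \in A_i \setminus A_i^{\rm inner}$, the axis-$i$ projection of $x'$ lies strictly below $\min(A_i^{\rm inner}) \le a_i$, so $x'$ is deleted at the end of the iteration. Hence, if $x'$ participates in $A_i$ and survives, it must be the case that $x' \in A_i^{\rm inner}$ (symmetrically for $B_i$ and $B_i^{\rm inner}$).

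Second, I would bound $\Pr[x' \in A_i^{\rm inner} \mid x' \in A_i,\mathcal{F}_{i-1}]$ uniformly, where $\mathcal{F}_{i-1}$ is the history through iteration $i-1$. Conditional on $\mathcal{F}_{i-1}$, the rank $r_i$ is determined, and the noise defining ${\rm size}_{A_i}$ has density $O(1/n)$ everywhere (its scale is $10n$). Hence
\[
\Pr[x' \in A_i^{\rm inner} \mid x' \in A_i, \mathcal{F}_{i-1}] \;=\; \frac{\Pr[{\rm size}_{A_i} \in [r_i, r_i + n - 1]]}{\Pr[{\rm size}_{A_i} \ge r_i]} \;\le\; \tfrac{1}{10},
\]
where the bound is uniform in $r_i$ because (say, for Laplace noise with scale $10n$) the ratio of the density on an interval of length $n$ to the surviving tail is $O(1/n) \cdot n = O(1)$, and can be made $\le 1/10$ by the constants in the algorithm. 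An identical bound holds for $B_i^{\rm inner}$. Combined with the first observation, this shows that at every iteration $i$ in which $x'$ is alive and participates, the probability that $x'$ survives that iteration is at most $\tfrac{1}{10}$.

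Finally, I would conclude with a geometric-tail argument. Let $\eta_1 < \eta_2 < \cdots$ be the iterations at which $x'$ is alive at the start of the iteration and lies in $A_{\eta_k} \cup B_{\eta_k}$, and let $\tau^{\ast}$ be the length of this sequence. The previous step gives $\Pr[\tau^{\ast} \ge k+1 \mid \tau^{\ast} \ge k] \le \tfrac{1}{10}$, so $\Pr[\tau^{\ast} \ge k] \le (1/10)^{k-1}$, which is at most $\delta$ once $k = \Theta(\log(1/\delta))$. Since $x'$ belongs to at most two of the sets $\{A_i,B_i\}$ per iteration, the total participation count is at most $2\tau^{\ast} = O(\log(1/\delta))$. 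The main obstacle I expect is the density-ratio bound in the second step: the noise distribution must be chosen so that the ratio above stays below a fixed constant uniformly over every rank $r_i \in \{1,\dots,m\}$ (including tail regimes where $r_i \gg 100n$), and one must also handle the benign edge case where $A_i \cap B_i \neq \emptyset$ when few positive points remain along some axis.
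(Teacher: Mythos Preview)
Your plan is correct and tracks the paper's own intuition almost verbatim: conditional on $x'$ landing in $A_i$ (or $B_i$), there is a constant probability it falls in $A_i\setminus A_i^{\rm inner}$ and is therefore removed, so the number of participations has a geometric tail. The paper formalizes this same idea differently: rather than a direct stopping-time argument, it invokes an adaptive ``coin-flipping game'' lemma of \citet{GuptaLMRT10} (in the form of \citet{kaplan2020sparse}), setting $q_i=\Pr[\text{participate and survive}]$ and $\bar q_i=\Pr[\text{removed}]$, verifying $q_i\le 1/2$ and $\bar q_i\ge q_i/4$, and reading off the $O(\log(1/\delta))$ bound from the lemma. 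Your direct argument is arguably cleaner and self-contained; the game lemma just packages the adaptive conditioning you spell out by hand.

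Two small points. First, your step~1 claims that $x'\in A_i\setminus A_i^{\rm inner}$ forces deletion via $\min(A_i^{\rm inner})\le a_i$, but this inequality holds only when the interior-point subroutine $\mathcal{A}$ succeeds; the paper explicitly subtracts the failure probability $\beta$ when bounding the survival probability, and since $\beta<1/4$ the geometric base remains a constant below $1$. You should do the same. Second, the obstacle you flag in step~2 is exactly why the noise is Laplace: for ranks $r_i$ above the mean, memorylessness of the exponential tail gives $\Pr[{\rm size}\in[r_i,r_i+n-1]]/\Pr[{\rm size}\ge r_i]=1-e^{-(n-1)/b}$, a fixed constant below $1$ whenever the scale is $b=\Theta(n)$. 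The paper's actual parameters (scale $2\Delta$, inner size $\Delta$) give roughly $1-e^{-1/2}$ rather than $1/10$, but any constant below $1$ suffices for the $O(\log(1/\delta))$ conclusion. Your hand-wavy ``density $O(1/n)$ times length $n$'' line does not by itself give a uniform bound (the tail in the denominator can be tiny), so you should make the memorylessness step explicit.
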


This lemma holds because of our choice for the noise magnitude. In more detail, given that $x'\in A_i$, there is a constant probability that $x'\in A_i\setminus A_i^{\rm inner}$. Since the interior point $a_i$ is computed from $A_i^{\rm inner}$, in such a case we will have that $x'<a_i$, and hence, $x'$ is deleted from the data during this iteration. This means that every time $x'$ is included in $A_i$, there is a constant probability that $x'$ will be deleted from the data. Thus, one can show (using concentration bounds) that the number of times $i$ such that $x'\in A_i$ is bounded (w.h.p.). A similar argument also holds for $B_i$.

\begin{lemma}[informal]
In iterations $i$ in which $x'$ is {\em not} included in $A_i$ or $B_i$, we have that $a_i$ and $b_i$ are distributed {\em exactly} the same during the execution on $S$ and on $S'$.
\end{lemma}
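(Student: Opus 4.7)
The plan is to prove the lemma by constructing a coupling of the randomness across the two executions and then arguing pointwise. I would couple the noise vectors $\{{\rm size}_{A_j},{\rm size}_{B_j}\}_{j=1}^{d}$ used in the executions on $S$ and $S'$ by setting them to be equal, and likewise couple the internal randomness of $\AAA$ (use the same random tape in the two runs). With this coupling in place, the distributional claim reduces to a purely combinatorial statement: whenever $x'\notin A'_i\cup B'_i$ (using primes for quantities in the execution on $S'$), we must show $a_i=a'_i$ and $b_i=b'_i$.

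The core step is an induction on $j$ establishing the invariant: letting $P_j,P'_j$ be the sets of surviving positive points at the start of iteration $j$, either $P'_j=P_j$ (in which case the two executions coincide from iteration $j$ onward and the claim is trivial), or $P'_j=P_j\cup\{x'\}$. The base case $j=1$ is immediate. For the inductive step assume $P'_j=P_j\cup\{x'\}$ and that $x'\notin A'_j\cup B'_j$. Since $x'$ is not among the ${\rm size}_{A_j}$ smallest points of $P'_j$ on axis $j$, those ${\rm size}_{A_j}$ smallest points lie entirely in $P_j$, and hence they are exactly the ${\rm size}_{A_j}$ smallest points of $P_j$; so $A_j=A'_j$, which forces $A_j^{\rm inner}=(A'_j)^{\rm inner}$ and, via the shared random tape, $a_j=a'_j$. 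The same reasoning gives $B_j=B'_j$ and $b_j=b'_j$. Step~1(f) deletes a point iff its $j$-th coordinate lies outside $[a_j,b_j]=[a'_j,b'_j]$; since this rule is applied to the same points in both executions (apart from $x'$), every point other than $x'$ is deleted in both or in neither, so the invariant is preserved: either $x'$ too is deleted and the two executions become identical, or $x'$ survives and $P'_{j+1}=P_{j+1}\cup\{x'\}$. Applying the invariant to iteration $i$ itself yields $a_i=a'_i$ and $b_i=b'_i$ pointwise, and equality under the coupling implies equality in distribution.

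The main obstacle I expect is being careful about the precise meaning of ``iterations $i$ in which $x'$ is not in $A_i$ or $B_i$,'' since this is an event depending on the randomness of the algorithm rather than a deterministic quantity. The coupling argument resolves this cleanly because it establishes pointwise equality on every outcome in which $x'$ avoids $A'_i\cup B'_i$, which is strictly stronger than the conditional distributional statement one might first write down. A secondary subtlety is tie-breaking when multiple points share a coordinate along axis $j$: to ensure that ``the $k$ smallest points'' is a well-defined function of the multiset, I would fix an arbitrary deterministic total order on $X^d$ and use it uniformly in both executions, treating $x'$ like any other point, so that the argument ``$x'\notin A'_j$ implies the ${\rm size}_{A_j}$ smallest points of $P'_j$ coincide with those of $P_j$'' remains valid in the presence of ties.
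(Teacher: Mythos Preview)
Your induction has a real gap: the inductive step is only argued under the extra hypothesis $x'\notin A'_j\cup B'_j$, and you never say what happens at iterations $j<i$ where $x'\in A'_j\cup B'_j$. But such iterations can occur before iteration $i$. Under your coupling (same noise, same tape for $\AAA$), at such a $j$ the inner sets fed to $\AAA$ are neighboring but not equal, so the same tape may produce $a'_j\neq a_j$ or $b'_j\neq b_j$; Step~1(f) then deletes different points in the two runs, and the invariant $P'_{j+1}\in\{P_{j+1},\,P_{j+1}\cup\{x'\}\}$ can fail outright. Once the survivor sets differ by more than the single point $x'$, your argument at iteration $i$ no longer goes through even when $x'\notin A'_i\cup B'_i$. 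In short, the pointwise statement ``$x'\notin A'_i\cup B'_i\Rightarrow a_i=a'_i,\ b_i=b'_i$'' that you aim for is false in general, so the coupling does not deliver anything ``strictly stronger'' than the conditional claim.

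The paper sidesteps this by working with \emph{conditional} distributions given the previously released outputs $(p_{<i},w_{<i})$ (item~(ii) in the privacy proof). The crucial structural fact is that each deletion in Step~1(f) depends only on the released $p_j$, not on any hidden randomness. Hence, once one conditions both runs on the same history $(p_{<i},w_{<i})$, the surviving sets $\bar S_i$ and $\bar S'_i$ are deterministic and differ by at most $x'$, \emph{regardless} of whether $x'$ fell into $B'_j$ at some earlier $j$. From there the step you wrote is exactly right: if in addition $x'\notin B'_i$ (which is determined by $w_i$ and $\bar S'_i$), then $B_i=B'_i$ and the conditional law of $p_i$ coincides. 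The conditioning, not a coupling of internal coins, is what re-synchronizes the two executions after a ``bad'' iteration; your proposal is missing precisely this mechanism.
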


Indeed, in such an iteration, the point $x'$ has no effect on the outcome distribution of $\AAA$ (who computes $a_i,b_i$). 
Overall, w.h.p.,\ there are at most $O(\log\frac{1}{\delta})$ axes the point $x'$ effects. We pay in composition only for those axes, while in all other axes we get privacy ``for free''. This allows us to save a factor of $\sqrt{d}$ in the sample complexity, and obtain an algorithm with sample complexity linear in $d$.

Note that the definition of privacy we work with is that of $(\varepsilon,\delta)$-differential privacy.
In contrast to the case of $(\varepsilon,0)$-differential privacy,
where it suffices to analyze the privacy loss w.r.t.\ every {\em single} possible outcome,
with $(\varepsilon,\delta)$-differential privacy we must account for arbitrary events. 
To tackle this, we had to perform a more explicit and meticulous analysis than that outlined above.
Our analysis draws its structure from the proof of the advanced-composition theorem \citep{dwork_boosting_2010},
but instead of composing everything we aim to preform \emph{effective composition},
meaning that we incurr a privacy loss only on a small fraction of the iterations.
To achieve this, as we mentioned, we partition the iterations into several types -- iteration on which we ``pay'' in privacy and iterations on which we do not. However, this partition must be done carefully, as the partition itself is random and needs to be different for different possible outcomes.

We believe that ideas from our work can be used more broadly, and hope that they find new applications in avoiding (or reducing) composition costs in other settings.

\begin{myremark}{\em
To simplify the presentation, in the technical sections of this paper we assume that the target rectangle is placed at the origin. Our results easily extend to arbitrary axis-aligned rectangles.
}\end{myremark}

\section{Preliminaries}
\label{sec:preliminaries}
\paragraph{Notations.} Two datasets $S,S' \in \domain$
are said to be \emph{neighboring}
if they differ exactly on one element,
formally,
$\distH{S}{S'} = 1$. 
Given a number $\ell\in\N$ and a dataset $S$ containing points from an ordered domain,
we use $\min(S,\ell)$ (or $\max(S,\ell)$) 
to indicate the subset of $\ell$ minimal (or maximal) values within $S$.
When $S$ contains points from a $d$-dimentional domain, we write $\min_i(S,\ell)$ (or $\max_i(S,\ell)$) to denote the subset of $\ell$ minimal (or maximal) values within $S$ w.r.t.\ the $i^{th}$ axis.
We write $\Lap(\mu,b)$ to denote the \emph{Laplase distribution} % 
with mean $\mu$ and scale % 
$b$, when the mean is zero we will simply write $\Lap(b)$.

We use standard definitions from statistical learning theory. 
See, e.g.,\ \cite{shalev-shwartz_understanding_2014}.
A classifier is a function 
$f: \domain \to \{0,1\}$.
\begin{definition}[Generalization error]
The generalization error of a classifier $f$ w.r.t.\ a distribution $\measure$ is defined as 
$
\errD{f} = \Pr_{(x, y) \sim \measure}[f(x) \neq y].
$
\end{definition}

We focus on the \emph{realizable} setting in which 
for a class $\class$ of potential classifiers,
there exist some $h^* \in \class$,
s.t $\errD{h^*} = 0$.

\begin{definition}[Sample error]
The empirical error of a classifier $f$ 
w.r.t.\ a labeled-sample $S \in (\domain\times\{0,1\})^n$ is defined as  
$
\errS{f}{S} = \frac{1}{n}\sum_{(x,y)\in S} \mathbbm{1}[f(x) \neq y].
$
\end{definition}

\begin{definition}[PAC learnability \cite{valiant1984theory}]
Let $\alpha,\beta\in[0,1]$ and let $\complexity\in\N$.
An algorithm $\alg$ is an $(\alpha,\beta,\complexity)$-PAC-learning algorithm for a class $\class$ if for every distribution $\measure$ over $\domain\times\{0,1\}$ s.t.\ $\exists h^*\in\class$ with $\errD{h^*}=0$, 
it holds that 
$
\Pr_{S\sim\measure^m}[\errD{\alg(S)} > \alpha] < \beta.
$ 
We refer to $\complexity$ as the \emph{the sample complexity} of $\alg$.
\end{definition}

\begin{definition}[Private-PAC learnability]
An algorithm $\alg$ is an $(\alpha,\beta,\eps,\delta,\complexity)$-PPAC learner for a class $\class$ 
if: 
\begin{enumerate*}[label=(\roman*)]
    \item $\alg$ is $(\eps,\delta)$-differentially private; and,
    \item $\alg$ is an $(\alpha,\beta,\complexity)$-PAC learning algorithm for $\class$.
\end{enumerate*}
\end{definition}

\begin{definition}[Shattering]
Let $\class$ be a class of functions over a domain $\domain$. A set $S = (s_1,\ldots,s_k) \subseteq \domain$
is said to be \emph{shattered} by $\class$
if
$
\abs{ \{(f(s_1),\ldots,f(s_k)) : f\in \class\} } = 2^k.
$
\end{definition}

\begin{definition}[VC Dimension \cite{vapnik_uniform_1971}]
The \emph{VC dimension} of a class $\class$,
denoted as $\vc{\class}$, 
is the cardinality of the largest set shattered by $\class$.
If $\class$ shatteres sets of arbitrary large cardinality then it is said that $\vc{\class}=\infty$.
\end{definition}

\begin{theorem}[VC Dimension Generalization Bound \cite{vapnik_uniform_1971,blumer1989learnability}]
\label{thm:vc}
Let $\class$ be a function-class 
and let $\measure$ be a probability measure over  $\domain\times\{0,1\}$.
For every $\alpha, \beta > 0$, every
$ n \in
\bigO{\frac{1}{\alpha}\left(\vc{\class}\log(\frac{1}{\alpha}) + \log(\frac{1}{\beta})\right)}$
and every $f \in \class$ it holds that
$
  \Pr_{S\sim \measure^n}[\exists f\in\class:
  \errD{f} \geq \alpha \wedge \errS{f}{S} \leq \alpha/10] \leq \beta.
$
\end{theorem}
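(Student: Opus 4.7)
The plan is to establish this standard uniform-convergence bound via the classical \emph{double-sample} (symmetrization) argument combined with the Sauer--Shelah lemma. Let $B_S$ denote the event that there exists $f\in\class$ with $\errD{f}\geq\alpha$ and $\errS{f}{S}\leq\alpha/10$; the goal is to show $\Pr_{S\sim\measure^n}[B_S]\leq \beta$ for $n=O\!\left(\tfrac{1}{\alpha}(\vc{\class}\log(1/\alpha)+\log(1/\beta))\right)$.

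First I would introduce a \emph{ghost sample} $S'\sim\measure^n$ drawn independently of $S$ and relate $B_S$ to the combinatorial event $\widetilde B$ that \emph{some} $f\in\class$ has $\errS{f}{S}\leq\alpha/10$ while $\errS{f}{S'}\geq\alpha/2$. On the event $B_S$, any witness $f$ has true error at least $\alpha$, so by a multiplicative Chernoff bound its empirical error on the fresh sample $S'$ exceeds $\alpha/2$ with probability at least $1/2$, provided $n\gtrsim 1/\alpha$; consequently $\Pr[B_S]\leq 2\Pr[\widetilde B]$.

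Second, I would bound $\Pr[\widetilde B]$ by a symmetrization-over-permutations argument: condition on the multiset $T=S\cup S'$ of size $2n$ and regard the split into two halves as a uniformly random partition. By the Sauer--Shelah lemma, the restriction of $\class$ to $T$ induces at most $\left(\tfrac{2en}{\vc{\class}}\right)^{\vc{\class}}$ distinct labelings, so it suffices to union-bound over these. For each fixed labeling, the probability (over the random partition) that one half's empirical error is at most $\alpha/10$ while the other's is at least $\alpha/2$ is at most $2\exp(-c\alpha n)$ by a Hoeffding-type tail bound for sampling without replacement.

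Finally, choosing $n$ so that $\left(\tfrac{2en}{\vc{\class}}\right)^{\vc{\class}}\cdot 2\exp(-c\alpha n)\leq \beta/2$ and solving the resulting inequality yields the claimed $n=O\!\left(\tfrac{1}{\alpha}(\vc{\class}\log(1/\alpha)+\log(1/\beta))\right)$. The main delicacy is the symmetrization step: one must ensure that the ``ghost-sample has large empirical error'' event has constant probability \emph{given the existence of a bad $f$ realized by $S$}, which is handled by first conditioning on $S$, then applying the tail bound to any specific data-dependent witness $f$, and absorbing the resulting constants into the $O(\cdot)$ in the sample complexity.
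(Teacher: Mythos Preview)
The paper does not prove this theorem; it is stated as a classical result and merely cited to \cite{vapnik_uniform_1971,blumer1989learnability}. So there is no ``paper's own proof'' to compare against.

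That said, your proposal is the standard textbook route to this bound (double-sample symmetrization, Sauer--Shelah, then a hypergeometric/Hoeffding tail over the random split), and the sketch is sound. The only minor quibble is bookkeeping: after the union bound you get a condition of the form $\left(\tfrac{2en}{\vc{\class}}\right)^{\vc{\class}}\exp(-c\alpha n)\leq \beta$, whose solution is $n=\Theta\!\left(\tfrac{1}{\alpha}\bigl(\vc{\class}\log\tfrac{n}{\vc{\class}}+\log\tfrac{1}{\beta}\bigr)\right)$; to turn the $\log(n/\vc{\class})$ into the stated $\log(1/\alpha)$ you need one bootstrap step (substitute the coarse bound for $n$ back into the logarithm). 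This is routine and well within the $O(\cdot)$ slack, but worth mentioning if you write it out in full.
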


\section{The Algorithm}
\label{sec:algorithm}
In this work we investigate the problem of privately learning the class of axis-aligned rectangles, defined as follows.

\begin{definition}[Axis Aligned Rectangles]
Let $\domain=\{0,\ldots,X\}^d$ be a finite discrete d-dimensional domain.
Every $p = (p_1,\ldots,p_d) \in \domain$,
induces a classifier $h_p:\domain\to\{0,1\}$ s.t 
for a given input $x \in \domain$ we have 
$$h_p (x) = 
\begin{cases}
1,\quad \forall i\in [d]: x_i \leq p_i\\
0,\quad \text{otherwise}
\end{cases}
$$ 
Define the class of all \emph{axis-aligned and origin-placed rectangles} as $REC^X_d = \{h_p : p\in\domain\}$.
\end{definition}

Let $\IP$ be an $(\eps, \delta)$-differentially private algorithm for solving the interior point problem over domain $\{0,\ldots,X\}$ with failure probability $\beta$ and sample complexity $\ASample{\IP}{\eps,\delta,\beta}$. 
We propose Algorithm~\ref{alg:main}, which we call \algname{RandMargins}, and prove the following theorem.

\begin{algorithm}[h]
   \caption{\algname{RandMargins}}
   \label{alg:main}
\begin{algorithmic}
   \STATE {\bfseries Input:} Data $S\subseteq \mathbb{R}^d$ of size $n$, and parameters $\beta < \frac{1}{4}$ and $\delta < 1/e^2,\eps$  
   \STATE {\bfseries Tool used:} An $(\eps, \delta)$-private algorithm $\IP$ for solving the interior point problem with failure probability $\beta$ and sample complexity  $\ASample{\IP}{\eps,\delta,\beta}$.\vspace{5pt}
   \STATE Denote $\Delta = \ASample{\IP}{\eps,\delta,\beta}$
   \STATE Denote $\mu = 4 \Delta \log(1/\beta)$
    \STATE Initialize $\bar{S} \leftarrow S$
   \FOR{$i=1$ {\bfseries to} $d$}
   \STATE $w_i \sim Lap(\var)$
   \STATE $B_i = \max_i(\bar{S},\lceil \mu + w_i \rceil)$
   \STATE $\inner{i} = \min_i(B_i, \Delta)$
   \STATE $p_i \leftarrow \IP\left(\inner{i},\varepsilon,\delta,\beta\right)$
   \STATE $R_i = \{y\in \bar{S} : y[i] \geq p_i\}$
   \STATE $\bar{S} \leftarrow \bar{S}\setminus R_i$
   \ENDFOR
   \STATE \textbf{Return} $(p_1,\ldots,p_d)$
\end{algorithmic}
\end{algorithm}

\begin{theorem}\label{thm:main}
Let $\varepsilon < 1 , \delta < \frac{1}{e^2}, \alpha, \beta$. 
Algorithm~\ref{alg:main} is 
$(\alpha,\beta,\bigeps,\bigdelta)$-PPAC learner, for the $REC_d$ class,
given a labeled sample of size
$\bigO{ \ASample{\IP}{\eps,\delta,\beta}\cdot  \frac{d}{\alpha}\log\left(\frac{1}{\alpha}\right)
\log\left(\frac{1}{\beta}\right)}$, 
for
$
\bigdelta = (d+2)\delta,$
and 
$
\bigeps = \bigO{\varepsilon\log(1/\delta)}.
$
\end{theorem}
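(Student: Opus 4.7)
The proof splits into a (relatively standard) utility analysis and a (technically novel) privacy analysis; I would present them separately.

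\textbf{Utility.} First, I would condition on the joint good event that every one of the $d$ calls to $\IP$ returns a correct interior point of its inner set $\inner{i}$, and that every Laplace noise $w_i$ lies in $[-2\Delta\log(d/\beta),\,2\Delta\log(d/\beta)]$; by a union bound this event has probability at least $1-O(d\beta)$, and by rescaling $\beta$ it loses only logarithmic factors. Under this event, $p_i$ lies between $\min(\inner{i})$ and $\max(\inner{i})$, so in particular $p_i$ is at most the $i$-th coordinate of some point of $B_i\subseteq S$; since $B_i$ consists of the topmost points along axis $i$ (drawn only from the positive examples, as implicit in the algorithm) and the target rectangle is realizable, $p_i\le p_i^{*}$, so the returned hypothesis has \emph{no} false positives. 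For the false negatives, the only positive examples misclassified by $(p_1,\dots,p_d)$ are those removed into some $R_i$; since $|R_i|\le|B_i|\le \mu+|w_i|\le O(\Delta\log(1/\beta)+\Delta\log(d/\beta))$, the total number of mislabeled points in $S$ is $O\bigl(d\Delta\log(d/\beta)\bigr)$. Plugging this into the VC-dimension generalization bound of Theorem~\ref{thm:vc} with $\vc{REC_d^X}=O(d)$ and requiring the empirical error to be at most $\alpha/10$ produces the claimed sample complexity.

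\textbf{Privacy.} This is where the main difficulty lies. Fix neighboring databases $S,S'=S\cup\{(x',y')\}$. I would formalize the two informal lemmas from the introduction. Define the random set of ``exposed'' iterations
\[
E(S') = \{\,i\in[d] : x'\in B'_i\,\},
\]
computed along the execution on $S'$. The first lemma is that $|E(S')|\le k:=O(\log(1/\delta))$ except on an event of probability $\delta$: each time $x'$ survives into $B'_i$, there is a constant probability (coming from the $2\Delta$-scale Laplace noise on $|B'_i|$ versus the deterministic $\Delta$-sized inner window $\inner{i}$) that $x'\in B'_i\setminus \inner{i}$, in which case $x'<p_i$ and $x'$ is killed in step (f); the number of survivals is therefore stochastically dominated by a geometric variable, and the tail bound $\delta$ follows from a Chernoff/Azuma argument (or directly from a negative-binomial bound). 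The second lemma is that conditioned on any fixed realization of $(w_i)_{i\notin E(S')}$ and the outputs $(p_i)_{i\notin E(S')}$, the remaining randomness on the iterations $i\notin E(S')$ has the \emph{same} distribution on $S$ and on $S'$---because on those iterations $x'$ does not enter $B_i$, hence is not in $\inner{i}$, and $\IP$ has identical input on both executions.

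With these two lemmas in hand, I would adapt the proof skeleton of the advanced composition theorem of \citet{dwork_boosting_2010} as follows. Let $\mathcal{V}=(w_1,\dots,w_d,r_1,\dots,r_d)$ collect all internal randomness (noises and the internal coins of $\IP$). Define the privacy-loss random variable $\ln\bigl(\Pr[\text{view}\mid S']/\Pr[\text{view}\mid S]\bigr)$ but restricted to the iterations in $E(S')$; on the other iterations the loss is exactly $0$ by the second lemma. Discarding the probability-$\delta$ event that $|E(S')|>k$, the privacy loss becomes a sum of at most $k$ terms, each bounded by $\varepsilon$ in absolute value and with conditional expectation $O(\varepsilon^2)$, so Azuma's inequality applied as in the advanced-composition proof gives that with probability at least $1-\delta$ the total loss is at most $O(\varepsilon\sqrt{k\log(1/\delta)})=O(\varepsilon\log(1/\delta))$. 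Combining the failure events (the $|E(S')|>k$ event, the Azuma tail, and the slack of each iteration's $\delta$) yields $\bigdelta=(d+2)\delta$ and $\bigeps=O(\varepsilon\log(1/\delta))$.

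\textbf{Main obstacle.} The hard part is not the geometric-tail bound on $|E(S')|$ but rather the \emph{bookkeeping} required to apply the advanced-composition-style argument when the set $E(S')$ of ``paid-for'' iterations is itself a random function of the view---and, worse, is generally different in the two neighboring executions (so one cannot straightforwardly couple the two executions iteration-by-iteration). I would therefore define $E(\cdot)$ with respect to the execution on $S'$ only, push through the bound on $\Pr[\text{view}(S')\in F]$ by summing over the values of $E(S')$ and conditioning on its complement, and use the second lemma to collapse all the non-exposed coordinates of the view into a common measure shared by both executions. This careful partition-and-condition step is the technical heart of the privacy proof, and mirrors (but does not directly invoke) the martingale argument behind advanced composition.
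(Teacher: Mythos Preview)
Your proposal is correct and follows essentially the same route as the paper: split into utility and privacy, bound the number of ``exposed'' iterations by $O(\log(1/\delta))$, show the unexposed iterations contribute zero privacy loss, and sum the loss over the exposed ones. Two small deviations are worth noting.

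First, for the final step of the privacy argument you invoke Azuma as in advanced composition to bound the sum over exposed iterations by $O(\varepsilon\sqrt{k\log(1/\delta)})$. The paper does something simpler: it conditions on the Vadhan-style ``good event'' $G$ (your ``slack of each iteration's $\delta$'' is exactly the $d\delta$ cost of this event), under which each per-iteration log-ratio is \emph{deterministically} bounded by $2\varepsilon$, and then just multiplies by $|\mathcal{I}_{in}|\le O(\log(1/\delta))$. Since $k=O(\log(1/\delta))$, basic composition already gives $O(\varepsilon\log(1/\delta))$; your Azuma step yields the same order but is unnecessary machinery here.

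Second, your claim that the number of survivals is ``stochastically dominated by a geometric variable'' is the right intuition but needs care: whether $x'\in B'_i$ at step $i$ depends adaptively on all previous $(p_{<i},w_{<i})$, so a straight negative-binomial bound does not apply. The paper handles this via an adaptive ``game'' lemma of Gupta et al.\ (reformulated by Kaplan--Mansour--Stemmer), verifying two pointwise conditions on the per-round probabilities. Your Chernoff/Azuma suggestion can be made to work, but you should be explicit that the constant ``kill'' probability holds \emph{conditionally on the entire past}, not just marginally.
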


\begin{myremark}{\em
\citet{KaplanLMNS20} introduced an algorithm $\IP$ for the interior point problem with sample complexity 
$
\ASample{\IP}{\eps,\delta,\beta}=
\tildeO{\frac{1}{\eps} \log^{1.5}\left(\frac{1}{\delta}\right)\left(\logstar{|X|}\right)^{1.5}}.
$
Hence, using their algorithm within Algorithm~\ref{alg:main} provides the result of Theorem~\ref{thm:informal}.
}\end{myremark}

We analyze the privacy guarantees of Algorithm~\ref{alg:main} in Section~\ref{sec:privacy-analysis}, and show the following lemma.

\begin{lemma}
\label{lem:privacy}
Let $\varepsilon$ and $\delta < \frac{1}{e^2}$,
given a labeled sample of size
$\bigO{ \ASample{\IP}{\eps,\delta,\beta}\cdot   \frac{d}{\alpha}\log\left(\frac{1}{\alpha}\right)
  \log\left(\frac{1}{\beta}\right)}$,
Algorithm~\ref{alg:main} is $(\bigeps,\bigdelta)$-differentially private,
for
$
\bigdelta = (d+2)\delta,$
and
$\bigeps = \bigO{\varepsilon\log(1/\delta)}.
$
\end{lemma}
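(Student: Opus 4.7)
My plan is to prove Lemma~\ref{lem:privacy} by a coupling-and-charging argument whose overall structure mimics the proof of the advanced composition theorem, but where only an $\bigO{\log(1/\delta)}$-sized subset of iterations incurs privacy cost, while all remaining iterations produce pointwise-identical outputs in the two coupled executions. Fix neighboring inputs $S$ and $S' = S \cup \{(x',y')\}$ differing on a single labeled point.

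First I would set up a same-randomness coupling of the two executions: each Laplace noise $w_i$, together with the internal coins of every invocation of $\IP$, is shared between the runs on $S$ and on $S'$. With this coupling, each iteration $i$ falls into one of two types. In the \emph{uncharged} case $x' \notin B'_i$: then the top $\lceil \mu + w_i \rceil$ points of $\bar{S}$ and of $\bar{S}'$ along axis $i$ coincide, hence $B_i = B'_i$, $\inner{i} = \inner{i}'$, and the two calls to $\IP$ produce identical $p_i$; the invariant ``$\bar{S}' \in \{\bar{S}, \bar{S} \cup \{x'\}\}$'' is preserved after the deletion step $R_i$. In the \emph{charged} case $x' \in B'_i$: then $B_i$ and $B'_i$ differ by a single swap, $\inner{i}$ and $\inner{i}'$ are at symmetric-difference distance at most two, and by (group) $(\eps,\delta)$-privacy of $\IP$ the distributions of $p_i$ in the two runs are $(\bigO{\eps},\bigO{\delta})$-indistinguishable.

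The heart of the argument is bounding the number of charged iterations. Conditional on $x' \in B'_i$, the Laplace noise $w_i$ with scale $2\Delta$ smears the rank of $x'$ inside $B'_i$ on a $\Delta$-scale, so the probability that $x'$ lies in the thin inner belt $\inner{i}'$ (of size $\Delta$) rather than in the outer belt of $B'_i$ (of size $\Theta(\Delta \log(1/\beta))$) is bounded above by some constant $q<1$. On the complementary event, the outer-belt condition forces $x'[i] > \max \inner{i}' \ge p_i$, so $x'$ is placed in $R_i$ and deleted from $\bar{S}'$, preventing any further iteration from being charged. Hence the total number of charged iterations is stochastically dominated by a geometric random variable with parameter $1-q$ and exceeds $k := \bigO{\log(1/\delta)}$ with probability at most $\delta$.

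Finally, I would close with an advanced-composition-style accounting restricted to the charged iterations. Let $E$ denote the intersection of the events that at most $k$ iterations are charged and that every invocation of $\IP$ returns a valid interior point; a union bound then gives $\Pr[\neg E] \le (d+1)\delta$. Conditioned on $E$, the coupled executions differ only through at most $k$ adaptive invocations of an $(\bigO{\eps}, \bigO{\delta})$-DP primitive, so basic composition over those $k$ steps yields privacy loss $\bigO{\eps \log(1/\delta)} = \tilde\eps$ with slack $k\delta \le \delta$. Combining all failure probabilities gives the claimed $(\tilde\eps,(d+2)\delta)$-guarantee. The most delicate step is formally handling the conditioning: the set of charged iterations is itself data-dependent and the privacy loss random variable on each charged iteration must be bounded uniformly over the history, which is precisely the failure mode of the naive attempts sketched in the introduction. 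Here this is enabled by the algorithm's design -- separating the noisy set $B_i$ from the inner set $\inner{i}$ so that whenever $x'$ sits inside $B_i'$, the fresh Laplace noise $w_i$ provides an independent coin deciding whether $x'$ is charged or deleted -- which removes the many-to-one coupling degeneracy that stymied Failed Attempt~\#2 in the overview.
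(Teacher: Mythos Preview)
Your high-level strategy—partition the iterations into those where $x'$ enters $B'_i$ and those where it does not, bound the former set by $O(\log(1/\delta))$, and pay composition only over that set—is exactly the paper's plan. The gap is in the mechanism you use to relate the two executions. You share the internal coins of $\IP$ across the runs on $S$ and $S'$; in a charged iteration this gives $D_i\ne D'_i$, and with identical coins the outputs $p_i=\IP(D_i,\text{coins})$ and $p'_i=\IP(D'_i,\text{coins})$ need not agree (the $(\eps,\delta)$-DP guarantee of $\IP$ is about output distributions, not about pointwise behavior under fixed coins). Once $p_i\ne p'_i$, the deletion sets $R_i=\{y:y[i]\ge p_i\}$ and $R'_i=\{y:y[i]\ge p'_i\}$ can differ on arbitrarily many points, so your invariant $\bar{S}'\in\{\bar{S},\bar{S}\cup\{x'\}\}$ is destroyed. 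From that point on even iterations you label ``uncharged'' may have $B_i\ne B'_i$, and the claim that ``the coupled executions differ only through at most $k$ adaptive invocations of a DP primitive'' fails. This is precisely the domino effect of Failed Attempt~\#1.

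The paper sidesteps this by conditioning on \emph{outputs} rather than on randomness: it augments the released output to include the noise vector $w$, and for any fixed history $(p_{<i},w_{<i})$ both $\bar{S}_i$ and $\bar{S}'_i$ are deterministic functions of that history (deletions depend only on the already-published $p_j$'s), so one always has $\bar{S}'_i\in\{\bar{S}_i,\bar{S}_i\cup\{x'\}\}$ with no divergence possible. The per-iteration privacy loss $\ln\bigl(\Pr[\blackbox(\bar{S}_i)=(p_i,w_i)]/\Pr[\blackbox(\bar{S}'_i)=(p_i,w_i)]\bigr)$ is then exactly zero on $\event{out}\cup\event{after}$, and on a good event $G$ (built from Lemma~\ref{lem:eps-close-events}, with $\Pr[G^c]\le d\delta$) it is bounded by $O(\eps)$ for each $i\in\event{in}$; summing and invoking Claim~\ref{claim:concentrate} gives $\bigeps=O(\eps\log(1/\delta))$. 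As a smaller point, your geometric bound on $|\event{in}|$ is morally right but not literally a sequence of independent trials—both ``$x'\in B'_i$'' and ``$x'\in B'_i\setminus D'_i$'' depend on the same fresh $w_i$ and on the adaptive history—so the paper formalizes it via the adaptive game of \citet{GuptaLMRT10} rather than a direct geometric tail bound.
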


We analyze the utility guarantees of Algorithm~\ref{alg:main} in Section~\ref{sec:accuracy},
and show the following lemma.

\begin{lemma}
\label{lem:accuracy}
Let $\alpha, \beta, \eps, \delta$,
given a labeled sample of size
$\bigO{ \ASample{\IP}{\eps,\delta,\beta}\cdot  \frac{d}{\alpha}\log\left(\frac{1}{\alpha}\right)
  \log\left(\frac{1}{\beta}\right)}$
with probability at least $1-\beta$
Algorithm~\ref{alg:main} is $\alpha$-accurate.
\end{lemma}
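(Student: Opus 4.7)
The plan is to reduce to the standard VC generalization bound: once I show that \algname{RandMargins} achieves empirical error at most $\alpha/10$ with probability at least $1-\beta$ on its input, Theorem~\ref{thm:vc} applied to $REC^X_d$ (whose VC dimension is $O(d)$) delivers the claim, since the stated sample size dominates the required $\bigO{\frac{1}{\alpha}(d\log\frac{1}{\alpha}+\log\frac{1}{\beta})}$. The real work, then, is bounding the empirical error, and I carry this out by conditioning on two high-probability events.

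The first event, (i), is that every one of the $d$ invocations of $\IP$ returns a true interior point of its input; this holds with probability at least $1-O(\beta)$ by a union bound together with a mild rescaling of the confidence parameter passed to $\IP$ to $\beta/(d+2)$ (the resulting $\log d$ factor is absorbed into the stated sample complexity). The second event, (ii), is that every Laplace draw satisfies $|w_i|\leq 2\Delta\ln(d/\beta)$, which follows from the standard tail of $\Lap(\var)$ and a union bound over $i\in[d]$. Since $\mu = 4\Delta\log(1/\beta)$, on event (ii) we have $\lceil \mu + w_i\rceil \geq \Delta$, so the sets $B_i$ and $\inner{i}$ are well-defined and the call $\IP(\inner{i},\eps,\delta,\beta)$ is valid throughout.

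On the good event, I first claim that $p_i\leq p^*_i$ for every axis $i$, where $p^*$ denotes the target. The algorithm is applied to the positive points of $S$ (a trivial label-filtering preprocessing step), so every $x\in \bar S$ satisfies $x_j\leq p^*_j$ for all $j$, and in particular $\max_i(\inner{i},1)\leq p^*_i$. Since $p_i$ is an interior point of $\inner{i}$, we get $p_i\leq p^*_i$, and the output rectangle $h_p$ is therefore contained in $h_{p^*}$; no negatively labeled point is misclassified. A positively labeled input point is misclassified by $h_p$ only if it was removed from $\bar S$ at some iteration, so the number of misclassified inputs is at most $\sum_{i=1}^d |R_i|$. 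Since $p_i\geq \min_i(\inner{i},1)$, which is the $\lceil \mu+w_i\rceil$-th largest $i$-th coordinate in $\bar S$ at the start of iteration $i$, fixing a consistent lexicographic tie-breaking rule gives $|R_i|\leq \lceil \mu+w_i\rceil = \bigO{\Delta\log(d/\beta)}$. Summing yields $\bigO{d\Delta\log(d/\beta)}$ misclassified input points, which is at most $\alpha n/10$ for the sample size stated in the lemma, completing the empirical error bound.

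The step I expect to require the most care is the bookkeeping around the single $\beta$ that plays two roles (the lemma's confidence parameter and the failure parameter passed to $\IP$ within the algorithm): inflating the internal failure parameters to $\beta/(d+2)$ and taking a union bound costs only a $\log d$ factor, which is already absorbed in the stated sample size via the $\log(1/\beta)$ factor and the definition of $\Delta=\ASample{\IP}{\eps,\delta,\beta}$. A secondary subtlety is ties along coordinates, which is handled by a fixed lexicographic tie-breaking rule so that $|R_i|$ does not exceed $\lceil \mu+w_i\rceil$ due to coordinate repetition.
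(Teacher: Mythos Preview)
Your proposal is correct and follows essentially the same route as the paper's proof: bound each Laplace-perturbed size $\lceil \mu+w_i\rceil$ by $O(\Delta\log(1/\beta))$, conclude that at most $O(d\Delta\log(1/\beta))$ positive points are ever removed (hence misclassified), observe that the returned rectangle is contained in the target so no negatives are misclassified, and finish with the VC generalization bound for $REC^X_d$. You are somewhat more careful than the paper about the union bound over the $d$ calls to $\IP$ and the $d$ Laplace draws (rescaling the internal confidence to $\beta/(d+2)$ and absorbing the $\log d$), and about tie-breaking along coordinates; the paper simply states the per-axis tail bound, multiplies by $d$, and does not discuss ties.
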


\section{Privacy Analysis}
\label{sec:privacy-analysis}

\begin{proof}[Proof of Lemma~\ref{lem:privacy}]

  Let $S$ and $S'=S\cup\{(x',y')\}$ be neighboring databases, differing on the labeled point $(x',y')$. Consider the execution on $S$ and on $S'$.
  
We denote by $\num{i}{x}$ the position of the point $x$ in the remaining data $\bar{S}$,
when the data is sorted by the $i^{th}$ coordinate. 

Denote by $i^*$ the first iteration on which
$x'[i] > p_i$, note that $i^*$ is a random variable.
For an input set $S$, denote by $\bar{S}_i$
the remaining set at the beginning of the $i^{th}$ iteration
and 
its size by $\bar{n}$.

Partition the iterations in the following way 
\begin{itemize}[itemjoin={,\quad}]
\item $\event{in} = \{i \leq i^*\mid x' \in B'_i\}$
\item $\event{out} = \{i < i^* \mid x' \notin B'_i\}$  
\item $\event{after} = \{i \mid i > i^*\}$
\end{itemize}

We first argue that $|\event{in}|$ is small (with high probability). Intuitively, this follows from the fact that conditioned on $x'\in B'_i$, with constant probability, we get that $x'\in B'_i\setminus D_i$. Note that in such a case, projecting on the $i^{th}$ axis, $x'$ is bigger (or equal) than any point in $D_i$. Furthermore, as the interior point $p_i$ is computed from $D_i$, w.h.p.\ we get that $x'[i]\geq p_i$, and hence $x'$ is removed from the data. To summarize, conditioned on $x'\in B'_i$ there is a constant probability that $x'$ is removed from the data, and hence the number of times such that $x'\in B'_i$ must be small (w.h.p.). We make this argument formal in the appendix, obtaining the following claim.

\begin{claim}\label{claim:concentrate}
\[
\Pr[|\event{in}| > 35 \log(1/\delta)]\leq \delta.
\]
\end{claim}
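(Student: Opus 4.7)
The plan is to view $|\event{in}|$ as the number of ``hits'' in a stochastic process that terminates (i.e., produces $i=i^*$) with constant conditional probability at every hit. Formally, set $Z_i = \indicator{x' \in \bar{S}_i \text{ and } x' \in B'_i}$, so that $|\event{in}| = \sum_{i=1}^d Z_i$; note also that the last iteration on which $Z_i = 1$ is exactly $i^*$, since $x' \notin B'_i$ implies $x'[i] < \min(\inner{i}) \leq p_i$ and hence no removal. The key one-step claim I will establish is: for every iteration $i$ and every history $H_{i-1}$ consistent with $x' \in \bar{S}_i$,
\[
\Pr\bigl[x' \text{ is removed at iteration } i \,\big|\, H_{i-1},\, x' \in B'_i\bigr] \;\geq\; c,
\]
for an absolute constant $c > 0$ independent of $d$, $\Delta$, $\mu$, and the data. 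Given this, a direct stochastic-domination argument shows $\Pr[|\event{in}| \geq j+1 \mid |\event{in}| \geq j] \leq 1 - c$ (on $|\event{in}| \geq j$, the point $x'$ survived the $j$-th hit), so $\Pr[|\event{in}| > 35\log(1/\delta)] \leq (1-c)^{35\log(1/\delta)} \leq \delta$ whenever $c \geq 1 - e^{-1/35} \approx 0.028$, a comfortable margin.

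The one-step claim is proved in two steps. First, by correctness of $\IP$ (which holds with probability at least $1 - \beta \geq 3/4$) we have $p_i \leq \max(\inner{i})$; combined with the fact that any point in $B'_i \setminus \inner{i}$ lies at or above $\max(\inner{i})$ along the $i^{\text{th}}$ axis, this shows that the event $\{x' \in B'_i \setminus \inner{i}\}$ together with the correctness of $\IP$ implies removal of $x'$. Second, conditional on $H_{i-1}$ the position $k$ of $x'$ in $\bar{S}_i$ along axis $i$ (counted from the top) is deterministic, and the events $\{x' \in B'_i\}$ and $\{x' \in B'_i \setminus \inner{i}\}$ translate exactly to $\{w_i > k-1-\mu\}$ and $\{w_i > k+\Delta-1-\mu\}$ respectively. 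A short case analysis of the Laplace tail with scale $\var = 2\Delta$ shows that the conditional probability $\Pr[w_i > k+\Delta-1-\mu \mid w_i > k-1-\mu]$ is bounded below by $\tfrac{1}{2} e^{-1/2}$ uniformly in $k$ and $\mu$. Combining the two yields $c \geq \tfrac{3}{4} \cdot \tfrac{1}{2} e^{-1/2} \geq \tfrac{1}{10}$.

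The main technical obstacle is being careful about the conditioning. I will take $H_{i-1}$ to be the sigma-algebra generated by the noise draws $w_1, \ldots, w_{i-1}$ and by the internal randomness of the first $i-1$ invocations of $\IP$; then $\bar{S}_i$, and hence the position $k$ of $x'$, are deterministic functions of $H_{i-1}$, while $w_i$ and the fresh randomness of $\IP$ at step $i$ are independent of $H_{i-1}$. This makes both the Laplace tail bound and the $\IP$-correctness call genuine conditional-probability statements. A minor secondary issue is the rounding in $\lceil \mu + w_i \rceil$, which shifts the Laplace arguments by at most one unit; this is absorbed into the universal constant since $\var = 2\Delta \geq 2$.
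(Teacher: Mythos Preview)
Your approach is correct and arrives at the same conclusion via essentially the same mechanism as the paper, but the packaging differs. The paper does not carry out the geometric-domination argument directly; instead it invokes a black-box ``adaptive game'' lemma of \cite{GuptaLMRT10} (as restated in \cite{kaplan2020sparse}): one defines per-round probabilities $q_i=\Pr[x'\in B'_i \wedge x'[i]<p_i]$ and $\bar q_i=\Pr[x'[i]\ge p_i]$, verifies the two hypotheses $q_i\le\tfrac12$ and $\bar q_i\ge q_i/4$ via the same Laplace-tail computation you do, and then reads off $\Pr[|\event{in}|>\gamma]\le e^{-\gamma/5+6}$. Your argument unpacks what that lemma is doing---each hit is followed by termination with constant conditional probability, hence the number of hits is stochastically dominated by a geometric random variable---and proves it from scratch. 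Your route is more self-contained and needs only the single hypothesis (removal-given-hit $\ge c$), whereas the game lemma additionally requires $q_i\le\tfrac12$; the paper's route is more modular and simply cites existing machinery.

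Two small cosmetic points, neither affecting the bound. First, your parenthetical ``on $|\event{in}|\ge j$, the point $x'$ survived the $j$-th hit'' is not literally true (the $j$-th hit could be $i^*$ itself); what you actually use, and what is correct, is that $\{|\event{in}|\ge j+1\}\subseteq\{x'\text{ not removed at the }j\text{-th hit}\}$. Second, your claim that $x'\notin B'_i$ forces non-removal relies on $\IP$ being correct at step $i$; if $\IP$ fails there, $x'$ could be removed on a non-hit iteration, but this only decreases $|\event{in}|$ and so is harmless.
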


Next,
we will denote by $\blackbox$ the inner steps of the loop in the algorithm.
Meaning, the input is $\bar{S}_i$,
which $\blackbox$ uses, along with the random noise and the mechanism $\IP$,
in order to output $p_i$.
Note that $\blackbox$ can be seen as a stand-alone $(\varepsilon, \delta)$-differentially private algorithm (essentially amounts to a single execution of algorithm $\IP$). 
For convenience, we will assume that the $\blackbox$'s output includes the noise value $w_i$, and that the final output of \algname{RandMargins} includes the noise vector $w=(w_1,\ldots,w_d)$. As will be proven below, algorithm \algname{RandMargins} remains differentially private even when releasing this noise vector (in addition to the output $(p_1,\dots,p_d)$).

\begin{lemma}[\cite{lindell_complexity_2017}]
  \label{lem:eps-close-events}
  For every $(\varepsilon, \delta)$-private  algorithm $M$ and every two neighboring datasets $S,S'$, there exist an event $G=G(M,S,S')$ such that 
  \begin{enumerate}[label=\roman*),itemjoin={,\quad}]
  \item $\Pr[M(S)\in G] > 1-\delta$
  \item  $\Pr[M(S')\in G] > 1-\delta$
  \item   $\forall x\in G :
    \left|\ln\left(\frac{\Pr(M(S)=x)}{\Pr(M(S')=x)}\right)\right| \leq \varepsilon$.
  \end{enumerate}
\end{lemma}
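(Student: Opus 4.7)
The plan is to construct $G$ pointwise as the set of outcomes on which $M(S)$ and $M(S')$ already look $\varepsilon$-indistinguishable, and then use the $(\varepsilon,\delta)$-DP guarantee of $M$ to bound the mass of the complement under each of $M(S)$ and $M(S')$. Condition~(iii) will fall out of the very definition of $G$, while conditions~(i) and~(ii) are where the actual work lives.

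Concretely, I would set
\[
B_1 = \{x : \Pr[M(S)=x] > e^{\varepsilon}\Pr[M(S')=x]\}, \qquad B_2 = \{x : \Pr[M(S')=x] > e^{\varepsilon}\Pr[M(S)=x]\},
\]
and take $G = (B_1\cup B_2)^{c}$. Then every $x\in G$ satisfies $e^{-\varepsilon}\leq \Pr[M(S)=x]/\Pr[M(S')=x]\leq e^{\varepsilon}$, so (iii) is immediate. For (i), since $B_1$ and $B_2$ are disjoint, it suffices to bound $\Pr[M(S)\in B_1]$ and $\Pr[M(S)\in B_2]$ separately. On $B_1$, summing the pointwise inequality yields $\Pr[M(S)\in B_1] > e^{\varepsilon}\Pr[M(S')\in B_1]$, while $(\varepsilon,\delta)$-DP applied to the event $B_1$ gives the reverse bound $\Pr[M(S)\in B_1]\leq e^{\varepsilon}\Pr[M(S')\in B_1]+\delta$. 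On $B_2$, the pointwise inequality gives $\Pr[M(S)\in B_2] < e^{-\varepsilon}\Pr[M(S')\in B_2]$, and combining with the DP bound $\Pr[M(S')\in B_2]\leq e^{\varepsilon}\Pr[M(S)\in B_2]+\delta$ closes the loop analogously. Property~(ii) then follows by swapping the roles of $S$ and $S'$ throughout.

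The main obstacle is turning these sandwich inequalities into an actual bound of $\delta$ on $\Pr[M(S)\in B_1]$ itself, rather than just a bound on the ``excess'' $\Pr[M(S)\in B_1]-e^{\varepsilon}\Pr[M(S')\in B_1]$. To close this gap I would follow \cite{lindell_complexity_2017} and iteratively peel off the nested level-sets $L_k = \{x : \Pr[M(S)=x]/\Pr[M(S')=x] > e^{k\varepsilon}\}$ for $k\geq 1$ (so $L_1 = B_1$), apply the DP inequality to each $L_k$, and telescope the resulting geometric bounds to extract a bound of the form $\Pr[M(S)\in B_1]\leq \delta$. A symmetric telescoping handles $B_2$ and the $M(S')$ side, yielding both (i) and (ii).
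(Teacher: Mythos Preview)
The paper does not prove this lemma; it is simply cited from \cite{lindell_complexity_2017}, so there is no paper-side argument to compare your attempt against.

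Your construction $G=(B_1\cup B_2)^c$ is forced by condition~(iii), and you correctly pinpoint the real difficulty: $(\varepsilon,\delta)$-DP applied to $B_1$ yields only the excess bound $\Pr[M(S)\in B_1]-e^{\varepsilon}\Pr[M(S')\in B_1]\le\delta$, not $\Pr[M(S)\in B_1]\le\delta$. The telescoping you propose does not close this gap. Combining the DP inequality on $L_k$ with the pointwise estimate $q(L_k)<e^{-k\varepsilon}p(L_k)$ gives $p(L_k)\bigl(1-e^{(1-k)\varepsilon}\bigr)<\delta$, which is vacuous at $k=1$ and only controls $p(L_2),p(L_3),\dots$; nothing in the chain bounds $p(L_1)=p(B_1)$ itself. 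In fact the lemma with the sharp constant $1-\delta$ is false. Take the two-point output distributions $p=(0.1,\,0.9)$ and $q=(0.277,\,0.723)$ with $\varepsilon=1$ and $\delta=0.01$; one checks directly on all four events that this pair is $(1,0.01)$-indistinguishable, yet $\lvert\ln(p(1)/q(1))\rvert\approx 1.02>1$, so every $G$ satisfying~(iii) is contained in $\{2\}$ and $p(G)\le 0.9<0.99=1-\delta$. What the paper's downstream privacy-loss calculation actually needs is only the hockey-stick bound $\sum_x\max\{0,\,p(x)-e^{\varepsilon}q(x)\}\le\delta$ (equivalently, $p$ is within statistical distance $\delta$ of a sub-distribution pointwise dominated by $e^{\varepsilon}q$), and that is the form in which the result appears in \cite{lindell_complexity_2017}. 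So the right repair is to weaken the lemma's statement to match what is actually provable and actually used, not to search for a sharper telescoping.
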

Define the event 
$
  G = \{(p,w)
  \mid \forall j\in [d]: (p_j,w_j) \in G(\blackbox,\bar{S}_j,\bar{S}_j')\}
$, 
where $G(\blackbox,\bar{S}_j,\bar{S}_j')$ is the event guaranteed to exist
by applying Lemma~\ref{lem:eps-close-events} to $\blackbox,\bar{S}_j,\bar{S}_j'$.
Note that by Lemma~\ref{lem:eps-close-events} and the union bound
$\Pr[G] \geq 1 - d\delta$.

We wish to prove that for any possible output set $P$, it holds that
$$\Pr[\algname{RandMargins}(S)\in P] \leq e^{\bigeps}\cdot\Pr[\algname{RandMargins}(S')\in P] + \bigdelta.$$ 
Define the set
\[
\epsset = \set*{(p,w) \left|     
    \ln\left(
    \frac{\Pr[\RandMargin(S)=(p,w)]}{\Pr[\RandMargin(S')=(p,w)]}\right) 
    > \bigeps \right.},
\]
    where $\RandMargin$ is an abbreviation for \algname{RandMargins}.
    
Now note that for every event $P$,
\begin{align*}
    \Pr&[\RandMargin(S) \in P] \\
    &\leq \Pr[\RandMargin(S) \in \epsset] + \Pr[\RandMargin(S) \in P\setminus \epsset]\\ 
    &\leq \Pr[\RandMargin(S) \in \epsset] + e^{\bigeps} \Pr[\RandMargin(S') \in P\setminus \epsset] \\
    &\leq \Pr[\RandMargin(S) \in \epsset| + e^{\bigeps} \Pr[\RandMargin(S') \in P]
    \end{align*}
    
So it is down to show that 
$
\Pr[\RandMargin(S) \in \epsset] \leq \bigdelta.
$ 
That is, we need to prove that
$$
  \Pr_{p,w \leftarrow \RandMargin(S)}\left[\ln\left(\frac{\Pr(\RandMargin(S)=p,w)}{\Pr(\RandMargin(S')=p,w)}\right)
    > \bigeps \right]\leq \bigdelta.
$$ 
We calculate,

\begin{align*}
    \Pr&_{p,w \leftarrow \RandMargin(S)}\left[\ln\left(\frac{\Pr(\RandMargin(S)=p,w)}{\Pr(\RandMargin(S')=p,w)}\right)
       > \bigeps  \right] \\
   =&
     \Pr_{p,w \leftarrow \RandMargin(S)}\left[ \left(
     \ln\left(\frac{\Pr(\RandMargin(S)=p,w)}{\Pr(\RandMargin(S')=p,w)}\right)
     \cdot
     \indicator{p,w \in G}
     > \bigeps \right)
     \text{ OR }
     \left(
     \ln\left(\frac{\Pr(\RandMargin(S)=p,w)}{\Pr(\RandMargin(S')=p,w)}\right)
     \cdot
     \indicator{p,w \not\in G}
     > \bigeps \right) \right]\\
   \leq& 
     \Pr_{p,w \leftarrow \RandMargin(S)}\left[\ln\left(\frac{\Pr(\RandMargin(S)=p,w)}{\Pr(\RandMargin(S')=p,w)}\right)
     \cdot \indicator{p,w \in G}
     > \bigeps \right] \\
     &+
     \Pr_{p,w \leftarrow \RandMargin(S)}\left[\ln\left(\frac{\Pr(\RandMargin(S)=p,w)}{\Pr(\RandMargin(S')=p,w)}\right)
     \cdot \indicator{p,w \not\in G}
     > \bigeps \right] \\  
   \leq& 
     \Pr_{p,w \leftarrow \RandMargin(S)}\left[\ln\left(\frac{\Pr(\RandMargin(S)=p,w)}{\Pr(\RandMargin(S')=p,w)}\right)
     \cdot \indicator{p,w \in G}
     > \bigeps \right]
     +
     (1 - \Pr[G]) \\
   \leq&
     \Pr_{p,w \leftarrow \RandMargin(S)}\left[\ln\left(\frac{\Pr(\RandMargin(S)=p,w)}{\Pr(\RandMargin(S')=p,w)}\right)
     \cdot \indicator{p,w \in G}
     > \bigeps \right]
     +
     d\delta.
\end{align*}

It remains to prove that
$\Pr_{p,w \leftarrow \RandMargin(S)}
\left[\ln\left(\frac{\Pr(\RandMargin(S)=p,w)}{\Pr(\RandMargin(S')=p,w)}\right)
  \cdot \indicator{p,w \in G}
  > \bigeps \right] \leq 2\delta.$ We calculate,

\begin{align*}
  \Pr_{p,w \leftarrow \RandMargin(S)}&\left[\ln\left(\frac{\Pr(\RandMargin(S)=p,w)}{\Pr(\RandMargin(S')=p,w)}\right)
       \cdot \indicator{p \in G}
       > \bigeps \right] \\
  =&
     \Pr_{p,w \leftarrow \RandMargin(S)}\left[\ln\left(
     \prod_{i=1}^{d}
     \frac{\Pr(\RandMargin(S)_i=p_i,w_i
     \mid p_{<i}, w_{<i})}
     {\Pr(\RandMargin(S')_i=p_i,w_i
     \mid p_{<i}, w_{<i})}\right)
     \cdot \indicator{p,w \in G}
     > \bigeps \right] \\
  =&
     \Pr_{p,w \leftarrow \RandMargin(S)}\left[\sum_{i=1}^{d}\ln
     \left(\frac{\Pr(\RandMargin(S)_i=p_i,w_i\mid p_{<i}, w_{<i})}
     {\Pr(\RandMargin(S')_i=p_i,w_i\mid =p_{<i}, w_{<i})}\right)
     \cdot \indicator{p,w\in G}
     > \bigeps  \right]  \\
  \leq&
        \Pr_{p,w \leftarrow \RandMargin(S)}\left[\sum_{i=1}^{d}\left(\ln
        \left(
        \frac{\Pr(\RandMargin(S)_i=p_i,w_i\mid p_{<i}, w_{<i})}
        {\Pr(\RandMargin(S')_i=p_i,w_i\mid p_{<i}, w_{<i})}        
        \right)
        \cdot \indicator{p_i,w_i \in G_{i}(\blackbox,\bar{S}_i,\bar{S}_i')}\right)    
        > \bigeps \right]
  \\
  =&
     \Pr_{p,w \leftarrow \RandMargin(S)}\Bigg[
     \sum_{i\in \event{in}}\left(\ln
     \left(
     \frac{\Pr(\RandMargin(S)_i=p_i,w_i\mid p_{<i}, w_{<i})}
     {\Pr(\RandMargin(S')_i=p_i,w_i\mid p_{<i}, w_{<i})}
     \right)
     \cdot \indicator{p_i,w_i \in G_{i}(\blackbox,\bar{S}_i,\bar{S}_i')}\right)
  \\
     &+
       \sum_{i\in \event{out}}\left(\ln
       \left(
       \frac{\Pr(\RandMargin(S)_i=p_i,w_i\mid p_{<i}, w_{<i})}
       {\Pr(\RandMargin(S')_i=p_i,w_i\mid p_{<i}, w_{<i})}
       \right)
       \cdot \indicator{p_i,w_i \in G_{i}(\blackbox,\bar{S}_i,\bar{S}_i')}\right)
  \\
     &+
       \sum_{i\in \event{after}}\left(\ln
       \left(
       \frac{\Pr(\RandMargin(S)_i=p_i,w_i\mid p_{<i}, w_{<i})}
       {\Pr(\RandMargin(S')_i=p_i,w_i\mid p_{<i}, w_{<i})}
       \right)
       \cdot \indicator{p_i,w_i \in G_{i}(\blackbox,\bar{S}_i,\bar{S}_i')}\right)
       > \bigeps \Bigg].  
       \footnotemark 
       \numberthis \label{eq:final}
\end{align*}
\footnotetext{Note that the outer probability is over $p$ and $w$. This allows the partition of the iterations into $\event{in},\event{out},\event{after}$ to be well-defined, as this partition depends on $p,w$.}

We will prove the following
\begin{enumerate}[label=(\roman*)]
\item $\Pr\left[\sum_{i\in \event{after}}\ln
    \left(
      \frac{\Pr(\RandMargin(S)_i=p_i,w_i\mid p_{<i},w_{<i})}
      {\Pr(\RandMargin(S')_i=p_i,w_i\mid p_{<i},w_{<i})}
    \right)
    \cdot \indicator{p_i,w_i \in G_{i}(\blackbox,\bar{S}_i,\bar{S}_i')}
    = 0 \right] = 1$
\item $ \Pr\left[\sum_{i\in \event{out}}\ln
    \left(
      \frac{\Pr(\RandMargin(S)_i=p_i,w_i\mid p_{<i},w_{<i})}
      {\Pr(\RandMargin(S')_i=p_i,w_i\mid p_{<i},w_{<i})}
    \right)
    \cdot \indicator{p_i,w_i \in G_{i}(\blackbox,\bar{S}_i,\bar{S}_i')}
    = 0 \right] = 1$
 \item $ \Pr\left[\sum_{i\in \event{in}}\ln
     \left(
       \frac{\Pr(\RandMargin(S)_i=p_i,w_i\mid p_{<i},w_{<i})}
       {\Pr(\RandMargin(S')_i=p_i,w_i\mid p_{<i},w_{<i})}
     \right) 
     \cdot \indicator{p_i,w_i \in G_{i}(\blackbox,\bar{S}_i,\bar{S}_i')}
     \leq \bigeps \right] \geq 1-\delta$
\end{enumerate}

Combining the above three claims implies a bound on \eqref{eq:final} and finishes the proof.

\begin{innerproof}[Proof of (i)]
  After $i^*$,
  by the algorithm definition,
  $x'$ gets removed from $S'$.
  Hence, for every $i > i^*$,
  conditioning on $\RandMargin(S)_{<i}=p_{<i}$,
  it holds that $B'_i = B_i$.
  This implies that, for every 
  $i \in \event{after}$,
  
  \[
    \Pr(\RandMargin(S)_i=p_i,w_i\mid p_{<i},w_{<i})
    =
    \Pr(\RandMargin(S')_i=p_i,w_i\mid p_{<i},w_{<i})
  \]
  
  which yields
  
  \begin{align*}
    &\frac{\Pr(\RandMargin(S)_i=p_i,w_i\mid p_{<i},w_{<i})}
      {\Pr(\RandMargin(S')_i=p_i,w_i\mid p_{<i},w_{<i})} = 1 \\
    &\Rightarrow
      \Pr\left[\sum_{i\in \event{after}}\ln
      \left(
      \frac{\Pr(\RandMargin(S)_i=p_i,w_i\mid p_{<i},w_{<i})}
      {\Pr(\RandMargin(S')_i=p_i,w_i\mid p_{<i},w_{<i})}
      \right) 
      \cdot \indicator{p_i,w_i \in G_{i}(\blackbox,\bar{S}_i,\bar{S}_i')}
      = 0 \right] = 1
  \end{align*}  
\end{innerproof}

\begin{innerproof}[Proof of (ii)]
  Recall that by the definition of $\event{out}$
  for every $i \in \event{out}$ it holds that
  $x' \notin B'_i$, and hence,
  conditioning on the previous outputs,
  $B'_i = B_i$. 
  We therefore get that the distribution of the $i^{th}$  output is also the same. 
  Formally, 
  $$\Pr(\RandMargin(S)_i=p_i,w_i\mid p_{<i},w_{<i})
  =
  \Pr(\RandMargin(S')_i=p_i,w_i\mid p_{<i},w_{<i}).$$
  
  This results in
  \begin{align*}
    \label{eq:2}
    \ln&
    \left(
      \frac{\Pr(\RandMargin(S)_i=p_i,w_i\mid p_{<i},w_{<i})}
      {\Pr(\RandMargin(S')_i=p_i,w_i\mid p_{<i},w_{<i})}
    \right)
    \cdot \indicator{p_i,w_i \in G_{i}(\blackbox,\bar{S}_i,\bar{S}_i')}
    = 0 \\
    &\Rightarrow
    \Pr\left[\sum_{i\in \event{out}}\ln
    \left(
      \frac{\Pr(\RandMargin(S)_i=p_i,w_i\mid p_{<i},w_{<i})}
      {\Pr(\RandMargin(S')_i=p_i,w_i\mid p_{<i},w_{<i})}
    \right)
    \cdot \indicator{p_i,w_i \in G_{i}(\blackbox,\bar{S}_i,\bar{S}_i')}
    = 0 \right] = 1
    .
  \end{align*}
\end{innerproof}

\begin{innerproof}[Proof of (iii)]
Note that, as we assume that the output of $\RandMargin$ includes the random Laplasian noise,
  then by fixing the past output-point $p_{<i},w_{<i}$
  we also fix $\bar{S}_i,\bar{S}_i'$.
  So,
  \begin{align*}
    \ln&
    \left(
      \frac{\Pr(\RandMargin(S)_i=p_i,w_i\mid p_{<i},w_{<i})}
      {\Pr(\RandMargin(S')_i=p_i,w_i\mid p_{<i},w_{<i})}
    \right)    
    \cdot \indicator{p_i,w_i  \in G_{i}(\blackbox,\bar{S}_i,\bar{S}_i')} \\
    & =
     \ln
    \left(
      \frac{\Pr(\blackbox(\bar{S}_i)=p_i,w_i)}{\Pr(\blackbox(\bar{S}'_i)=p_i,w_i)}
    \right)
    \cdot \indicator{p_i,w_i  \in G_{i}(\blackbox,\bar{S}_i,\bar{S}_i')}.
  \end{align*}
    
    Moreover, by the definition of the events $G_i$ it holds that
  \[
    \Pr\left[ \abs{\ln
    \left(
      \frac{\Pr(\blackbox(\bar{S}_i)=p_i,w_i)}{\Pr(\blackbox(\bar{S}'_i)=p_i,w_i)}
    \right)
    \cdot \indicator{p_i,w_i  \in G_{i}(\blackbox,\bar{S}_i,\bar{S}_i')}} \leq 2\varepsilon \right] = 1.
  \]
  which yields
  \begin{align*}
  \Pr&\left[
   \sum_{i\in\event{in}}
  \ln
    \left(
      \frac{\Pr(\blackbox(\bar{S}_i)=p_i,w_i)}{\Pr(\blackbox(\bar{S}'_i)=p_i,w_i)}
    \right)
    \cdot \indicator{p_i,w_i  \in G_{i}(\blackbox,\bar{S}_i,\bar{S}_i')} 
    > 
    \bigeps
    \right] \\
    &\leq
    \Pr\left[
    \sum_{i\in\event{in}} 2\varepsilon
    >
    \bigeps
    \right]    
    \leq
    \Pr\left[
    \abs{\event{in}}
    >
    \frac{\bigeps}{2\varepsilon}
    \right]
    \leq \delta,
  \end{align*}
 
 where the last inequality follows from Claim~\ref{claim:concentrate} and from our choice of $\bigeps = \bigO{\varepsilon\log(1/\delta)}.
$
 \end{innerproof}
  %
    
%  %%%%%%%%%%%%%%%%%%%%%%%%%%%%%%%%%%%%%%%%%%%%%%%%%%

\end{proof}

\section{Utility}
\label{sec:accuracy}

\begin{proof}[Proof of Lemma~\ref{lem:accuracy}]
  First, we must ensure that at every iteration, with high probability,
  we have enough points left in $\bar{S}$.
  At the same time we must ensure
  that the axillary algorithm $\IP$ will output an inner point of the given subset.
  Denote $a_j = w_j + \mu$.
  By the definition of the noise $w$ and the mean $\mu$,
  we get that for every iteration $i$: \quad
  $
    \Pr[a_i > 6 \Delta \log(1/\beta)] < \beta
  $.
  Hence, with probability $\geq 1- d\beta$, it holds that for every $i$
  $a_i \leq 6 \Delta \log(1/\beta)$.
  This means that the total number of removed point is at most
  $6d\Delta \log(1/\beta)$.
  Therefore, for a sample of size $6d\Delta \log(1/\beta)$
  with high probability $\bar{S}$ will contain enough points.

  Regarding the algorithm's accuracy, we notice that
  at every  iteration $j$, $\IP$ outputs a point
  which is at least the $a_j$-th largest point from
  \emph{the points left in the set}.
  This means that, in the worst case,
  we delete $a_j$ points from the data set at this iteration.
  Hence, again in worst case, we will output the $\sum_{j=1}^{i} a_j$-th largest point
  in the $j^{th}$ axis.

  By the above reasoning, 
  with high probability we can say that for every $i$ it holds that
  $a_j \leq 6 \Delta \log(1/\beta)$.
  Meaning that  every $p_j$ is at least the
  $\sum_{j=1}^d a_j \leq 6 d \Delta \log(1/\beta)$
  largest point in the axis.
  This implies that, 
  for sample of size
  $\bigO{\frac{d \Delta}{\alpha} \log(1/\alpha)\log(1/\beta)}$,
  denoting the by $h_p$ the hypothesis induces by the output of Algorithm~\ref{alg:main} 
  $
    \Pr_{S\sim \measure^n}[\errS{h_p}{S} \geq \alpha/2] \leq \beta/2.
  $
  Since the VC-dimension of the class $REC_d$ is $2d$, by Theorem~\ref{thm:vc} and the fact that the sample size is at least as the sample complexity bound
  $\bigO{\frac{1}{\alpha}\left(d\log\left(\frac{1}{\alpha}\right) + \log\left(\frac{1}{\beta}\right)\right)}$
  it holds that: \quad
  $
    \Pr_{S\sim \measure^n}[\errD{h_p} \geq \errS{h_p}{S} + \alpha/2] \leq \beta/2.
  $
  Combining the two bounds concludes the proof.
\end{proof}

\section*{Acknowledgments}
M.S.\ and U.S.\ were supported in part by the Israel Science Foundation (grant 1871/19) and by the Cyber Security Research Center at Ben-Gurion University of the Negev.

\bibliographystyle{plainnat}

\begin{thebibliography}{24}
\providecommand{\natexlab}[1]{#1}
\providecommand{\url}[1]{\texttt{#1}}
\expandafter\ifx\csname urlstyle\endcsname\relax
  \providecommand{\doi}[1]{doi: #1}\else
  \providecommand{\doi}{doi: \begingroup \urlstyle{rm}\Url}\fi

\bibitem[Alon et~al.(2019)Alon, Livni, Malliaris, and Moran]{AlonLMM19}
Noga Alon, Roi Livni, Maryanthe Malliaris, and Shay Moran.
\newblock Private {PAC} learning implies finite littlestone dimension.
\newblock In \emph{{STOC}}, pages 852--860. {ACM}, 2019.

\bibitem[Alon et~al.(2020)Alon, Beimel, Moran, and Stemmer]{AlonBMS20}
Noga Alon, Amos Beimel, Shay Moran, and Uri Stemmer.
\newblock Closure properties for private classification and online prediction.
\newblock In \emph{{COLT}}, volume 125 of \emph{Proceedings of Machine Learning
  Research}, pages 119--152. {PMLR}, 2020.

\bibitem[Beimel et~al.(2014)Beimel, Brenner, Kasiviswanathan, and
  Nissim]{BBKN12}
Amos Beimel, Hai Brenner, Shiva~Prasad Kasiviswanathan, and Kobbi Nissim.
\newblock Bounds on the sample complexity for private learning and private data
  release.
\newblock \emph{Machine Learning}, 94\penalty0 (3):\penalty0 401--437, 2014.

\bibitem[Beimel et~al.(2016)Beimel, Nissim, and Stemmer]{BNS13b}
Amos Beimel, Kobbi Nissim, and Uri Stemmer.
\newblock Private learning and sanitization: Pure vs. approximate differential
  privacy.
\newblock \emph{Theory of Computing}, 12\penalty0 (1):\penalty0 1--61, 2016.
\newblock \doi{10.4086/toc.2016.v012a001}.
\newblock URL \url{http://www.theoryofcomputing.org/articles/v012a001}.

\bibitem[Beimel et~al.(2019{\natexlab{a}})Beimel, Moran, Nissim, and
  Stemmer]{BeimelMNS19}
Amos Beimel, Shay Moran, Kobbi Nissim, and Uri Stemmer.
\newblock Private center points and learning of halfspaces.
\newblock In \emph{{COLT}}, volume~99 of \emph{Proceedings of Machine Learning
  Research}, pages 269--282. {PMLR}, 2019{\natexlab{a}}.

\bibitem[Beimel et~al.(2019{\natexlab{b}})Beimel, Nissim, and Stemmer]{BNS13}
Amos Beimel, Kobbi Nissim, and Uri Stemmer.
\newblock Characterizing the sample complexity of pure private learners.
\newblock \emph{J. Mach. Learn. Res.}, 20:\penalty0 146:1--146:33,
  2019{\natexlab{b}}.
\newblock URL \url{http://jmlr.org/papers/v20/18-269.html}.

\bibitem[Beimel et~al.(2020)Beimel, Nissim, and Stemmer]{BNS15}
Amos Beimel, Kobbi Nissim, and Uri Stemmer.
\newblock Learning privately with labeled and unlabeled examples.
\newblock \emph{Algorithmica}, pages 1--39, 2020.

\bibitem[Blumer et~al.(1989)Blumer, Ehrenfeucht, Haussler, and
  Warmuth]{blumer1989learnability}
Anselm Blumer, Andrzej Ehrenfeucht, David Haussler, and Manfred~K Warmuth.
\newblock Learnability and the vapnik-chervonenkis dimension.
\newblock \emph{Journal of the ACM (JACM)}, 36\penalty0 (4):\penalty0 929--965,
  1989.

\bibitem[Bun et~al.(2015)Bun, Nissim, Stemmer, and Vadhan]{BNSV15}
Mark Bun, Kobbi Nissim, Uri Stemmer, and Salil~P. Vadhan.
\newblock Differentially private release and learning of threshold functions.
\newblock In \emph{{FOCS}}, pages 634--649, 2015.

\bibitem[Bun et~al.(2019)Bun, Nissim, and Stemmer]{BunNS19}
Mark Bun, Kobbi Nissim, and Uri Stemmer.
\newblock Simultaneous private learning of multiple concepts.
\newblock \emph{Journal of Machine Learning Research}, 20\penalty0
  (94):\penalty0 1--34, 2019.
\newblock URL \url{http://jmlr.org/papers/v20/18-549.html}.

\bibitem[Bun et~al.(2020)Bun, Livni, and Moran]{BunLM20}
Mark Bun, Roi Livni, and Shay Moran.
\newblock An equivalence between private classification and online prediction.
\newblock In \emph{{FOCS}}, pages 389--402. {IEEE}, 2020.

\bibitem[Dwork et~al.(2006)Dwork, McSherry, Nissim, and Smith]{DMNS06}
Cynthia Dwork, Frank McSherry, Kobbi Nissim, and Adam~D. Smith.
\newblock Calibrating noise to sensitivity in private data analysis.
\newblock In \emph{Theory of Cryptography, Third Theory of Cryptography
  Conference}, pages 265--284, 2006.

\bibitem[Dwork et~al.(2010)Dwork, Rothblum, and Vadhan]{dwork_boosting_2010}
Cynthia Dwork, Guy~N. Rothblum, and Salil Vadhan.
\newblock Boosting and {Differential} {Privacy}.
\newblock In \emph{2010 {IEEE} 51st {Annual} {Symposium} on {Foundations} of
  {Computer} {Science}}, pages 51--60, Las Vegas, NV, USA, October 2010. IEEE.
\newblock ISBN 978-1-4244-8525-3.
\newblock \doi{10.1109/FOCS.2010.12}.
\newblock URL \url{http://ieeexplore.ieee.org/document/5670947/}.

\bibitem[Feldman and Xiao(2015)]{FeldmanX15}
Vitaly Feldman and David Xiao.
\newblock Sample complexity bounds on differentially private learning via
  communication complexity.
\newblock \emph{{SIAM} J. Comput.}, 44\penalty0 (6):\penalty0 1740--1764, 2015.

\bibitem[Gupta et~al.(2010)Gupta, Ligett, McSherry, Roth, and
  Talwar]{GuptaLMRT10}
Anupam Gupta, Katrina Ligett, Frank McSherry, Aaron Roth, and Kunal Talwar.
\newblock Differentially private combinatorial optimization.
\newblock In \emph{{SODA}}, pages 1106--1125. {SIAM}, 2010.

\bibitem[Kaplan et~al.(2019)Kaplan, Mansour, Matias, and Stemmer]{KaplanMMS19}
Haim Kaplan, Yishay Mansour, Yossi Matias, and Uri Stemmer.
\newblock Differentially private learning of geometric concepts.
\newblock In \emph{{ICML}}, volume~97 of \emph{Proceedings of Machine Learning
  Research}, pages 3233--3241. {PMLR}, 2019.

\bibitem[Kaplan et~al.(2020{\natexlab{a}})Kaplan, Ligett, Mansour, Naor, and
  Stemmer]{KaplanLMNS20}
Haim Kaplan, Katrina Ligett, Yishay Mansour, Moni Naor, and Uri Stemmer.
\newblock Privately learning thresholds: Closing the exponential gap.
\newblock In \emph{{COLT}}, volume 125 of \emph{Proceedings of Machine Learning
  Research}, pages 2263--2285. {PMLR}, 2020{\natexlab{a}}.

\bibitem[Kaplan et~al.(2020{\natexlab{b}})Kaplan, Mansour, Stemmer, and
  Tsfadia]{KaplanMST20}
Haim Kaplan, Yishay Mansour, Uri Stemmer, and Eliad Tsfadia.
\newblock Private learning of halfspaces: Simplifying the construction and
  reducing the sample complexity.
\newblock In \emph{NeurIPS}, 2020{\natexlab{b}}.

\bibitem[Kaplan et~al.(2021)Kaplan, Mansour, and Stemmer]{kaplan2020sparse}
Haim Kaplan, Yishay Mansour, and Uri Stemmer.
\newblock The sparse vector technique, revisited.
\newblock In \emph{COLT}, 2021.

\bibitem[Kasiviswanathan et~al.(2011)Kasiviswanathan, Lee, Nissim,
  Raskhodnikova, and Smith]{KLNRS08}
Shiva~Prasad Kasiviswanathan, Homin~K. Lee, Kobbi Nissim, Sofya Raskhodnikova,
  and Adam Smith.
\newblock What can we learn privately?
\newblock \emph{SIAM J. Comput.}, 40\penalty0 (3):\penalty0 793--826, 2011.

\bibitem[Shalev-Shwartz and Ben-David(2014)]{shalev-shwartz_understanding_2014}
Shai Shalev-Shwartz and Shai Ben-David.
\newblock \emph{Understanding {Machine} {Learning}: {From} {Theory} to
  {Algorithms}}.
\newblock Cambridge University Press, USA, 2014.
\newblock ISBN 978-1-107-05713-5.

\bibitem[Vadhan(2017)]{lindell_complexity_2017}
Salil Vadhan.
\newblock The {Complexity} of {Differential} {Privacy}.
\newblock In Yehuda Lindell, editor, \emph{Tutorials on the {Foundations} of
  {Cryptography}}, pages 347--450. Springer International Publishing, Cham,
  2017.
\newblock ISBN 978-3-319-57047-1 978-3-319-57048-8.
\newblock \doi{10.1007/978-3-319-57048-8_7}.
\newblock URL \url{http://link.springer.com/10.1007/978-3-319-57048-8_7}.
\newblock Series Title: Information Security and Cryptography.

\bibitem[Valiant(1984)]{valiant1984theory}
Leslie~G Valiant.
\newblock A theory of the learnable.
\newblock \emph{Communications of the ACM}, 27\penalty0 (11):\penalty0
  1134--1142, 1984.

\bibitem[Vapnik and Chervonenkis(1971)]{vapnik_uniform_1971}
V.~N. Vapnik and A.~Ya. Chervonenkis.
\newblock On the {Uniform} {Convergence} of {Relative} {Frequencies} of
  {Events} to {Their} {Probabilities}.
\newblock \emph{Theory of Probability \& Its Applications}, 16\penalty0
  (2):\penalty0 264--280, January 1971.
\newblock ISSN 0040-585X.
\newblock \doi{10.1137/1116025}.
\newblock URL \url{https://epubs.siam.org/doi/10.1137/1116025}.
\newblock Publisher: Society for Industrial and Applied Mathematics.

\end{thebibliography}

\appendix

\section{Proof of Claim~\ref{claim:concentrate}}
In order to provide a concentration bound for adaptive cases such as the one at hand, 
\cite{GuptaLMRT10}, described the following ``game''. We will use a slight variation of their results, stated in \citep{kaplan2020sparse}.

\fbox{%
  \parbox{0.9\textwidth}{%
    \textbf{A $m$ round game} \\
    
    In each round $i$: \\
    \begin{enumerate}
    \item The adversary chooses $0\leq q_i \leq 1/2$
      and $q_i/4 \leq \bar{q_i} \leq 1-q_i$,
      possibly based on the first $(i-1)$ rounds
    \item A random variable $X_i \in \{0,1,2\}$ is sampled
      (and the outcome is given to the adversary),
      where $\Pr[X_i=1] = q_i$ and $\Pr[X_i=1] = \bar{q_i}$ and $\Pr[X_i=0] = 1-\bar{q_i}-q_i$
    \end{enumerate}    
  }
}

Upon that they define the following random variable
$Z_ i = \indicator{\forall j\leq i: X_j \neq 2}$.
Intuitively $Z_i$ indicates the status of the adversary,
it is 1 from the start up until the adversary ``fails''.
The adversary’s goal is to maximize the amount of time-steps on which $X_i = 1$
but his ``score'' is counted only until the first round when $X_i = 2$.

We will use the following lemma. 
\begin{lemma}[\citep{GuptaLMRT10,kaplan2020sparse}]
  \label{lem:sparse-revisited}
  For every adversary's strategy,
  \[
    \Pr\left[ \sum_{i = 1}^{m} Z_i \indicator{X_i = 1} > \gamma \right]
    \leq e^{\left(-\gamma/5 + 6\right)}
  \]
\end{lemma}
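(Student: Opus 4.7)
The plan is to cast the analysis as an instance of the adaptive game preceding Lemma~\ref{lem:sparse-revisited}. For each round $i\in[d]$, I will define $X_i\in\{0,1,2\}$ by $X_i=2$ if $x'$ is removed at iteration $i$ (that is, $x'[i]\geq p_i$), $X_i=1$ if $x'\in B'_i$ but $x'$ is not removed, and $X_i=0$ otherwise. Because $p_i$ is an interior point of $\inner{i}\subseteq B'_i$, the inequality $x'[i]\geq p_i$ forces $x'[i]\geq\min(\inner{i})=\min(B'_i)$, so $X_i=2$ entails $x'\in B'_i$. Consequently $i^*$ is exactly the first round at which $X_i=2$, and in the game's notation (with $Z_i=\indicator{\forall j\leq i:X_j\neq 2}$)
\[
|\event{in}|\;\leq\;1+\sum_{i=1}^{d}Z_i\cdot\indicator{X_i=1}.
\]

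The core of the proof is to verify the adversary's constraints $q_i\leq 1/2$ and $\bar q_i\geq q_i/4$, where $q_i=\Pr[X_i=1\mid\text{history}]$ and $\bar q_i=\Pr[X_i=2\mid\text{history}]$. Conditional on the history, $\bar S_i$ and hence the rank $k:=\num{i}{x'}$ are fixed, so the only fresh randomness is $w_i\sim\Lap(2\Delta)$ together with the internal coins of $\IP$. Direct inspection of the algorithm gives, up to a harmless $\pm 1$ from the ceiling,
\[
\{x'\in \inner{i}\}\;=\;\{w_i\in[k-\mu,\,k-\mu+\Delta)\},\qquad \{x'\in B'_i\setminus \inner{i}\}\;=\;\{w_i\geq k-\mu+\Delta\}.
\]
On $\{x'\in B'_i\setminus \inner{i}\}$ and conditional on $\IP$ succeeding, $p_i\leq\max(\inner{i})\leq x'[i]$, so $X_i=2$. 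An elementary computation with the Laplace density yields, for every $a\in\mathbb R$,
\[
\frac{\Pr[w_i\in[a,a+\Delta)]}{\Pr[w_i\geq a+\Delta]}\;\leq\;e^{\Delta/(2\Delta)}-1\;=\;\sqrt{e}-1\;<\;1,
\]
the worst case being $a\geq 0$. Combining this tail ratio with the $1-\beta$ success probability of $\IP$ yields $\bar q_i\geq q_i/4$ and $q_i\leq 1/2$, after absorbing the $\beta$ slack into the constant.

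With the adversary's constraints verified, Lemma~\ref{lem:sparse-revisited} applied with $\gamma=5\log(1/\delta)+30$ gives
\[
\Pr\left[\sum_{i=1}^{d}Z_i\,\indicator{X_i=1}\;>\;\gamma\right]\;\leq\;e^{-\gamma/5+6}\;\leq\;\delta,
\]
so that $|\event{in}|\leq 1+\gamma\leq 35\log(1/\delta)$ with probability at least $1-\delta$, as claimed (for $\delta<1/e^2$, as assumed in the algorithm). The delicate point I expect to require the most care is the verification of $\bar q_i\geq q_i/4$ in the regime where $\Pr[x'\in B'_i\setminus \inner{i}]$ is comparable to the $\IP$ failure probability $\beta$, so that the ``$-\beta$'' correction cannot be dismissed; my plan is to handle it by noting that in this regime $q_i$ itself is $O(\beta)$, and to take $\beta$ small enough that a union bound over the $d$ iterations absorbs these contributions. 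Since the sample complexity already depends on $\log(1/\beta)$, this tightening of $\beta$ only affects the bound in Theorem~\ref{thm:main} inside the suppressed logarithmic factors.
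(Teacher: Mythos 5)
The statement you were assigned is Lemma~\ref{lem:sparse-revisited} itself: the tail bound $\Pr\left[\sum_{i=1}^{m} Z_i\indicator{X_i=1}>\gamma\right]\le e^{-\gamma/5+6}$ for the abstract $m$-round adversarial game. Your write-up never proves this. What you do is set up the reduction from Algorithm~\ref{alg:main} to the game (defining $X_i\in\{0,1,2\}$, verifying $q_i\le 1/2$ and $\bar q_i\ge q_i/4$ via the $\Lap(2\Delta)$ tail) and then \emph{invoke} Lemma~\ref{lem:sparse-revisited} as a black box to conclude $|\event{in}|\le 35\log(1/\delta)$ with probability $1-\delta$. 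That is a proof of Claim~\ref{claim:concentrate}, not of the lemma, and as a proof of the lemma it is circular. A genuine proof must argue about the game in the abstract: conditioned on the history and on the game still being ``alive'', the constraint $\bar q_i\ge q_i/4$ forces any nonzero outcome to equal $2$ with conditional probability at least $1/5$, so the number of $1$'s scored before the first $2$ is (essentially) stochastically dominated by a geometric variable with parameter $1/5$; making this rigorous under full adaptivity (the adversary may set $q_i=\bar q_i=0$ and wait, and its choices may depend on past outcomes) is exactly the content of the moment-generating-function argument in \citet{GuptaLMRT10} and \citet{kaplan2020sparse}. Note also that the paper does not prove this lemma either --- it imports it verbatim from those references --- so there is no internal proof to compare against; but in either case your argument does not discharge it.

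For what it is worth, the argument you did write tracks the paper's appendix proof of Claim~\ref{claim:concentrate} quite closely: the same partition of outcomes, the same verification of the two adversary constraints (your ratio bound $e^{1/2}-1<1$ for $\Pr[w_i\in[a,a+\Delta)]/\Pr[w_i\ge a+\Delta]$ is the same Laplace calculation underlying the paper's ``$\ge \frac12-\beta$'' step), and the same choice of $\gamma=\Theta(\log(1/\delta))$. Your explicit conditioning on the history when defining $q_i,\bar q_i$, and the ``$+1$'' accounting for the round $i^*$ itself, are if anything slightly more careful than the paper's presentation. But this is an answer to a different question than the one posed.
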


Denote
$q_i = \Pr_{w_i,p_i}[x' \in S'_i \wedge x'[i] < p_i]$
and
$\bar{q_i} = \Pr_{w_i,p_i}[x'[i] \geq p_i]$.
Let $X_1,\ldots,X_d$ be a series of random variables with
$\Pr[X_i = 0] = 1-q_i-\bar{q_i}$, 
$\Pr[X_i = 1] = q_i$
and
$\Pr[X_i = 2] = \bar{q_i}$
Our goal is to bound the number of steps on which $X_i=1$.
By Lemma~\ref{lem:sparse-revisited}, it is indeed bounded, with high probability, 
as long as the following conditions hold
\begin{enumerate}
\item $q_i \leq \frac{1}{2}$
\item $\frac{q_i}{4} \leq \bar{q_i}.$
\end{enumerate}
We shall now prove that the two conditions do hold.

\begin{align*}
  \bar{q_i} &= \Pr_{p_i}[x'[i] > p_i] \\
  \geq& \Pr_{w_i,p_{\leq i}}[x'[i] > p_i
        \mid \num{i}{x'} \geq \bar{n} - (\mu + w_i)]
        \cdot\Pr[\num{i}{x'} \geq \bar{n} - (\mu  + w_i)] \\
  \geq& \Pr_{w_i,p_{\leq i}}[x'[i] > p_i
        \mid \num{i}{x'} \geq \bar{n} - (\mu + w_i)]
        \cdot q_i \\
  \geq& \left( \Pr_{w_i,p_{\leq i}}[x'\in S_i\setminus D_i
        \mid \num{i}{x'} \geq \bar{n} - (\mu + w_i)]
        - \beta \right)
        \cdot q_i \\
  =& \left(\Pr_{w_i,p_{\leq i}}[\num{i}{x'} \geq \bar{n} - (\mu +  w_i) + \Delta
     \mid \num{i}{x'} \geq \bar{n} - (\mu + w_i)]
     - \beta\right)
     \cdot q_i  \\
  \geq& \left(\frac{1}{2} - \beta\right) \cdot q_i \numberthis \label{eq:lap} \\
  \geq& \frac{1}{4} \cdot q_i 
\end{align*}

where \eqref{eq:lap} holds since $w_i \sim Lap(2\Delta)$.
The last inequality is due to the upper bound on $\beta$. 
For the first condition

\begin{align*}
  q_i &= \Pr_{w_i,p_{\leq i}}[\num{i}{x'} \geq \bar{n} - (\mu + w_i)
        \wedge x'[i] < p_i] \\
      &\leq \Pr_{w_i, p_{\leq i}}[\num{i}{x'} \geq \bar{n} - (\mu + w_i)
        \wedge x' \not\in S_i \setminus D_i]
        + \delta\\
      &= \Pr_{w_i, p_{\leq i}}[\num{i}{x'} \geq \bar{n} - (\mu + w_i)
        \wedge \num{i}{x'} < \bar{n} - (\mu + w_i) + \Delta]
        + \delta\\
      &\leq \frac{1}{4} + \delta \\
      &\leq \frac{1}{2},
\end{align*}
when the penultimate inequality holds, as before, by the distribution $w_i$. 
By Lemma~\ref{lem:sparse-revisited} 
this proves that
\[
\Pr[|\event{in}| > \gamma]\leq \exp\left( -\gamma/5 + 6 \right).
\]
Setting $\gamma = 35 \log(1/\delta)$ we get that
\[
\Pr[|\event{in}| > 35 \log(1/\delta)]\leq \delta.
\]

\end{document}